\newtheorem{definition}{Definition}
\newtheorem{theorem}{Theorem}
\newtheorem{lemma}{Lemma}
\newtheorem{example}{Example}
\newenvironment{proof}{\par\noindent\textbf{Proof.}\ }{\hfill$\square$\par}
\title{SITA: A Framework for Structure-to-Instance  Theorem Autoformalization}
\author {
    Chenyi Li\textsuperscript{\rm 1},
    Wanli Ma\textsuperscript{\rm 2},
    Zichen Wang\textsuperscript{\rm 1},
    Zaiwen Wen\textsuperscript{\rm 2}\thanks{Corresponding author}
}
\begin{document}

\maketitle

\begin{abstract}
While large language models (LLMs) have shown progress in mathematical reasoning, they still face challenges in formalizing theorems that arise from instantiating abstract structures in concrete settings. With the goal of auto-formalizing mathematical results at the research level, we develop a framework for structure-to-instance theorem autoformalization (SITA), which systematically bridges the gap between abstract mathematical theories and their concrete applications in Lean proof assistant. Formalized abstract structures are treated as modular templates that contain definitions, assumptions, operations, and theorems. These templates serve as reusable guides for the formalization of concrete instances. Given a specific instantiation, we generate corresponding Lean definitions and instance declarations, integrate them using Lean’s typeclass mechanism, and construct verified theorems by checking structural assumptions. We incorporate LLM-based generation with feedback-guided refinement to ensure both automation and formal correctness. Experiments on a dataset of optimization problems demonstrate that SITA effectively formalizes diverse instances grounded in abstract structures.
\end{abstract}

\begin{links}
    \link{Code}{https://github.com/chenyili0818/SITA}
   %
\end{links}

\section{Introduction}
Recent advances in large language models (LLMs) have demonstrated impressive capabilities in solving mathematical problems and generating natural language proofs \cite{ahn-etal-2024-large, welleck2021naturalproofs}. However, such informal outputs often lack the formal rigor required for verification by proof assistants. To address this limitation, interactive theorem provers such as Lean \cite{de2015lean}, Coq \cite{huet1997coq}, and  Isabelle \cite{Nipkow2002APA} have been developed to rigorously validate each step of a proof. These systems enhance the soundness and reliability of machine-generated mathematical reasoning \cite{yang2024formal}. 

Several recent efforts have explored the use of LLMs to construct formal proofs from given formal statements. Two main paradigms have emerged: stepwise generation and whole-proof generation. The stepwise approach, exemplified by Leandojo \cite{yang2023leandojo}, decomposes the proof process into premise selection and tactic generation, modeling proof search as a sequence of local decisions. BFS-prover \cite{xin2025bfsprover} utilizes best-first tree search in generation. In contrast, whole-proof generation attempts to synthesize an entire proof in a single pass, typically by first producing a rough draft and then refining it into a valid proof \cite{jiang2023draft}. These approaches often integrate natural language reasoning with formal symbolic reasoning, and are further improved through reinforcement learning techniques via expert iterations \cite{xin2024deepseekprover, ren2025deepseekprover, wang2025kimina, lin2025goedel}.

In parallel, the task of autoformalization, i.e. translating mathematical problems stated in natural language into formal statements, has also received increasing attention. Early approaches leverage few-shot prompting \cite{Brown2020language} to perform this translation \cite{wu2022autoformalization, azerbayev2023proofnet}. Building on this foundation, subsequent work improves output quality through sampling strategies that select optimal outputs from multiple generations \cite{li2024autoformalize, poiroux2025improvingautoformalizationusingtype}, or by retrieving related theorems from formal libraries \cite{liu2025rethinking}. While these methods are relatively easy to implement, their effectiveness remains limited by the capabilities of the underlying language model. Training-based approaches have also been explored for statement formalization \cite{jiang2023multilingual}. The Lean Workbook \cite{Leanworkbook2025Ying} introduces an iterative data generation and filtering pipeline to formalize problems from online math forums in Lean 4. ATLAS \cite{liu2025atlasautoformalizingtheoremslifting} further improves formalization performance through expert iteration and knowledge distillation.

\begin{figure*}[htbp]
  \centering
  \includegraphics[width=\textwidth,height=0.555\textwidth]{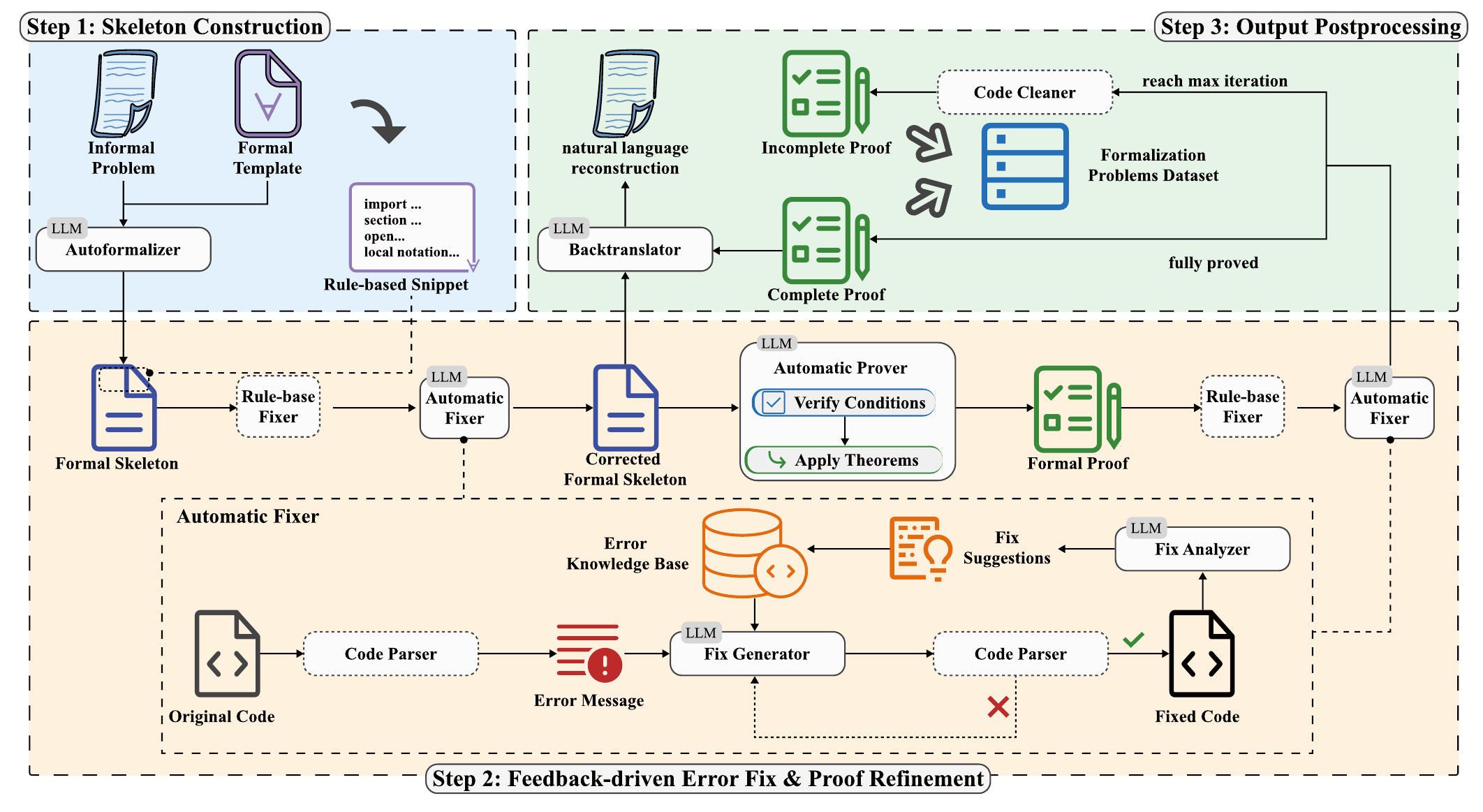} 
  \caption{Overall pipeline of SITA}
  \label{fig:pipeline}
\end{figure*}

In this paper, we focus on a specific and widely used form of mathematical reasoning: deriving concrete instances from abstract structures. This reasoning pattern involves applying abstract theorems, defined over general mathematical structures, such as convergence guarantees of gradient descent on convex functions, to derive instance-specific results, such as the convergence of gradient descent for ridge regression. While this form of reasoning might seem straightforward, verifying that a specific instance satisfies the assumptions of an abstract theorem often demands non-trivial effort. This process may involve reformulating definitions, proving auxiliary lemmas, or performing symbolic manipulations, and in some cases, it occupies entire sections of theoretical papers \cite{Duan2020Adaptive}. Although this reasoning paradigm is common in algorithmic mathematics and research-level papers \cite{Bao_2014_CVPR}, it heavily relies on expert intuition and manual work. It has not been systematically explored in the context of automated formalization yet. Our goal is to automate this process using the Lean proof assistant by translating natural language statements into formal definitions, verifying structural assumptions, and applying general theorems to construct formal proofs.

While most prior formalization research has focused on Olympiad-style mathematical problems, our study targets a broader and practically important form of reasoning: structure-to-instance theorems, with the goal of auto-formalizing mathematical results at the research level. We introduce SITA, an end-to-end automated pipeline (see Figure \ref{fig:pipeline}), which leverages LLMs to formalize structure-to-instance reasoning end-to-end. Given a formalized abstract structure and an informal instance description, the pipeline generates instance-specific formal definitions, verifies structural conditions, and applies the abstract result to produce a complete formal proof. This pipeline enables the reliable reuse of high-level mathematical theories in domain-specific applications, supporting the construction of modular and verifiable formal reasoning workflows. Our framework is applicable to domains such as optimization, signal processing, and algebraic computation, and contributes toward the development of scalable, theorem-grounded formal libraries.

Our main contributions are as follows.

1) We define the structure-to-instance reasoning task by identifying relevant mathematical structures, their concrete instances, and the key sub-tasks involved to apply general results to specific problems. 

2) We propose SITA, a framework that leverages LLMs to automatically generate formal definitions, lemmas, and theorems for instance-specific problems. An iterative refinement and error-feedback mechanism--guided by a growing knowledge base--enhances both the reasoning accuracy and the quality of generated formal code.

3) We construct a benchmark of optimization problems grounded in real-world applications to evaluate the proposed framework. Experimental results demonstrate that SITA successfully generates formal definitions, instantiates abstract theorems, and performs assumption verification.

\section{The Structure-to-Instance Problem}
\label{sec:problem}

\subsection{Mathematical Structures and Their Instances}
We mainly focus on operations defined over abstract mathematical structures. Following the approach \cite{baanen2022use} of bundling parameters and conditions into unified structures, a mathematical structure together with its associated operations is defined as follows.

\begin{definition}[Mathematical Structures with Operations]
A \emph{mathematical structure} is a four-tuple
\[
  \mathcal{S} = \bigl\langle
    \mathcal{D},\;
    \mathcal{O},\;
    \mathcal{C},\;
    \mathcal{T}
  \bigr\rangle.
\]
\begin{itemize}
\item $\mathcal{D}$ (\emph{Definitions}): Primitive notions or axioms introducing the fundamental objects of the theory.
\item $\mathcal{O}$ (Operations): Computational procedures, symbolic operations, or constructive rules built on $\mathcal{D}$, serving as the algorithmic or operational core of the structure. These may include algorithms, transformation rules, or abstract procedures defined over the definitions.
\item $\mathcal{C}$ (\emph{Conditions}): Assumptions imposed on elements in $\mathcal{D}$ and $\mathcal{O}$ to guarantee the correctness or performance.
\item $\mathcal{T}$ (\emph{Theorems}): Theorems and lemmas concerning the behavior of $\mathcal{O}$, derived from the definitions in $\mathcal{D}$ under the assumptions in $\mathcal{C}$.
\end{itemize}

\end{definition}

Such abstract structures are ubiquitous in mathematics. We provide an illustrative example below, with additional cases in the Appendix B. Operations as optimization algorithms are mainly studied in this paper.
\begin{example}
The gradient descent method for unconstrained optimization can be expressed as
\(
  \mathcal{S}_{\mathrm{GD}}
  = \bigl\langle
    \mathcal{D},\,\mathcal{O},\,\mathcal{C},\,\mathcal{T}
  \bigr\rangle.
\)
\begin{itemize}
  \item $\mathcal{D}$: The unconstrained optimization problem is defined as
    $\min_x f(x)$, where $ f: \mathbb{R}^n\to\mathbb{R}$.
  \item $\mathcal{O}$: The update scheme of the gradient descent method to solve the problem is
   \( x_{k+1}=x_k-\eta_k\nabla f(x_k).\)
  \item $\mathcal{C}$: Function $f(x)$ is smooth and convex; $\nabla f$ is $L$‐Lipschitz continuous; $0<\eta_k<1/L$.
  \item $\mathcal{T}$: A collection of theoretical results and guarantees related to the operations in $\mathcal{O}$, including:
\begin{itemize}
    \item the descent lemma;
    \item sublinear convergence theorems.
\end{itemize}
\end{itemize}
\end{example}

A concrete instance $\mathbb{I}$ of an abstract structure $\mathcal{S}$ is formed by specifying concrete realizations of its components, the definitions $\mathcal{D}$ and operations $\mathcal{O}$, within a specific problem domain. Although the abstract structure remains fixed, it can generate a wide range of instances that share the same formal framework but differ in the functions or variables used. Applying $\mathcal{S}_{\mathrm{GD}}$ to logistic regression or least-squares problems produces distinct instances with different objectives, while preserving similar structural properties.

Theoretical results in $\mathcal{T}$ can be applied to a concrete instance $\mathbb{I}$, provided that the conditions in $\mathcal{C}$ are satisfied. Verifying these conditions is often simpler than proving the results directly, as it only requires checking the set of assumptions. However, some verifications, such as the Kurdyka–Łojasiewicz (KL) property \cite{bolte2014proximal} in nonconvex optimization, can be challenging. The use of abstract structures enables the modular reuse of theoretical results across diverse instances, supporting the systematic generation of theoretical results.

\subsection{Autoformalization Targets}

Given a formally defined abstract structure $\mathcal{S}$ and a natural language description of a concrete instance where $\mathcal{S}$ applies, we consider the task of auto-formalizing such instances $\mathbb{I}$. Each instance includes its own problem data, operation and the corresponding theorems. We call this process the autoformalization of structure-to-instance theorems. The main idea is to use the formalized structure to guide the formalization of instances in application domains, allowing theoretical results to be reused. We use Lean as the theorem prover in this paper. A basic introduction to formalization using Lean is provided in Appendix A.

The structure-to-instance autoformalization paradigm consists of the following two steps. An illustration is also provided in Figure \ref{fig:illustration}.

\begin{enumerate}

\item \textbf{Instance specification and structural integration.}
We begin by formally specifying the instance-level problem data and associated operations, instantiating the abstract components $\mathcal{D}$ and $\mathcal{O}$. The instance is then integrated into the abstract framework $\mathcal{S}$ through \texttt{instance} declarations in Lean, which certify that the concrete problem provides definitions, operations, and structural properties compatible with those of $\mathcal{S}$. This alignment enables the generic definitions, operations, and theorems of $\mathcal{S}$ to be uniformly applied to the concrete setting.

\item \textbf{Theorem instantiation and verification.}
Once structural alignment is established, we verify that the instance satisfies the conditions in $\mathcal{C}$, concrete assumptions on the data and operations required for sound application of the theorems in $\mathcal{T}$. After proving these assumptions, we instantiate the abstract theorems in $\mathcal{T}$ to derive concrete results, thus avoiding redundant proof efforts.

\end{enumerate}





The structure-to-instance formalization paradigm embodies a central pattern in mathematical reasoning: once a property is established under general assumptions, it can be systematically instantiated across all problems that meet those conditions. Our framework systematizes this reasoning at scale, automating the process of generating verified proofs for a wide variety of instances from a single abstract theory.


This task fundamentally differs from traditional autoformalization efforts, which primarily focus on translating informal theorems into formal statements. Such approaches often neglect the formalization of definitions and concrete instances. In contrast, we consider a setting where the abstract structure $\mathcal{S}$ is already formalized, and the goal is to automate its instantiation: specializing defitions $\mathcal{D}$ and operations $\mathcal{O}$ to domain-specific elements, verifying that the structural conditions $\mathcal{C}$ are satisfied, and applying general theorems $\mathcal{T}$ to derive instance-level guarantees. This process enables the systematic reuse of abstract formal results across diverse application domains.

Moreover, this framework facilitates the creation of large, composable datasets of verified application theorems. These datasets support practical applications such as safety verification and certified manipulation, while also providing essential training data for future machine learning systems in formal mathematics, bridging symbolic reasoning and automated synthesis in a principled and extensible manner.

\begin{figure}
    \centering
    \includegraphics[width=\linewidth]{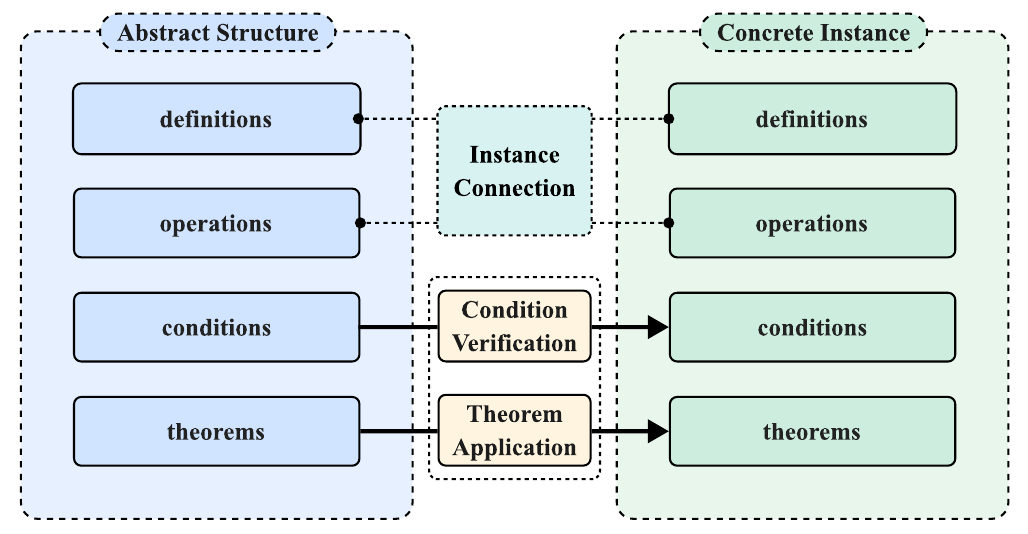}
    \caption{Illustration of structure-to-instance formalization.}
    \label{fig:illustration}
\end{figure}




{
\subsection{Example: Lasso}\label{example: Lasso}

To illustrate the structure-to-instance autoformalization paradigm shown in Fig.~\ref{fig:illustration}, we present an example involving the proximal gradient method \cite{Parikh2014Proximal} applied to the Lasso problem \cite{Tibshirani1996Regression}. We only show part of the code and highlight the main idea of our structure-to-instance formalization pipeline. The full code and detailed explanations are provided in the Appendix C. The code builds on Mathlib \cite{mathlibcommunity} and Optlib \cite{li2024formalization, li2025block}. 

\paragraph{1. Abstract Structure \(\mathcal{S}\)}
We first obtain the formalized definitions of the abstract structure. In this setting, $\mathcal{D}$ as the composite optimization problem \(\min_x \psi(x) = f(x) + h(x)\) is formalized as below.
\begin{lstlisting}
class composite_pro (f h : E → ℝ)
def composite_pro.tar (_ : composite_pro f h) := f + h
\end{lstlisting}
The corresponding optimization algorithm, the proximal gradient method, is given as follows.
\begin{lstlisting}
class pg (pro : composite_pro f h) (x₀ : E) := ...
\end{lstlisting}
Assuming appropriate conditions \(\mathcal{C}\) (e.g., convexity, Lipschitz smoothness), the convergence theorem \(\mathcal{T}\) is stated as:
\begin{lstlisting}
theorem pg_method_converge (conditions):
∀ k, (pro.tar (alg.x k) - pro.tar xm)
  ≤ 1 / (2 * k * alg.t) * ‖ x₀ - xm ‖ ^ 2 := 
\end{lstlisting}

\paragraph{2. Concrete Instance \(\mathbb{I}\): Lasso}

The Lasso problem minimizes $\frac{1}{2}\|Ax - b\|^2 + \mu\|x\|_1$. The proximal gradient method solves the Lasso problem with \(
(x_{k+1})_i = \operatorname{sign}(z_i) \cdot \max\{|z_i|-t\mu,0\}
\), where $z = x_k -t A^\top (Ax_k-b)$. We expect to auto-formalize the corresponding structures as:
\begin{lstlisting}
class Lasso_pro (A b μ) := ...
class pg_Lasso (pro : Lasso_pro A b μ) (x₀ : E) := ...
\end{lstlisting}

Instances below are used to link the Lasso problem class with the abstract class using Lean's instance mechanism, as illustrated in the central arrows of Figure~\ref{fig:illustration}:
\begin{lstlisting}
instance Lasso_pro.composite_pro : composite_pro f g := ...
instance pg_Lasso.pg : pg pro x₀ := {
  t := self.t, x := self.x, initial := ..., update := ...}
\end{lstlisting}

\paragraph{3. Theorem Transfer via Instance}

By verifying the assumptions in \texttt{pg\_method\_converge}, we obtain the Lasso-specific convergence result through reuse:
\begin{lstlisting}
theorem Lasso_convergence : ∀ (k : ℕ+), (pro.target (alg.x k) - pro.target xm) ≤ 1 / (2 * k * alg.t) * ‖ x₀ - xm ‖ ^ 2 := by sorry
\end{lstlisting}

This example demonstrates the complete mapping from an abstract structure \(\mathcal{S}\) to a concrete instance \(\mathbb{I}\) as shown in Figure~\ref{fig:illustration}. Instance declarations enable modular reuse of definitions and theorems, facilitating scalable formalization of optimization algorithms.
}

\section{The SITA Framework}
\label{sec:algo}

The \textbf{SITA} pipeline, illustrated in Figure~\ref{fig:pipeline}, provides an end-to-end framework for structure-to-instance theorem autoformalization. It transforms a natural language description of a concrete problem into a verified Lean file by aligning the underlying concepts with a formalized abstract structure. The pipeline comprises three main stages: (1) skeleton construction, (2) feedback-driven error fix and proof refinement, and (3) output postprocessing. Each stage integrates large language models\footnote{Detailed prompt templates are provided in Appendix D.2.} with Lean’s type system to ensure both automation and formal soundness.

\subsection{Skeleton Construction}

Given an \emph{informal problem description}, the process begins by identifying a suitable abstract structure, either automatically or via user input, that aligns with the concrete setting. Each abstract structure is represented as a reusable \emph{formal template} consisting of definitions, assumptions, and theorems. Using one-shot prompting, the system generates the core formal components of the instance, including problem-specific definitions, operation classes, and target theorem statements. It also produces \emph{instance lemmas} that declare Lean typeclass instances, connecting the concrete definitions to the abstract structure and certifying that the necessary structural conditions are satisfied. In addition, the system incorporates \emph{rule-based snippets} from the formal template, such as \texttt{import} statements, \texttt{section} headers, and \texttt{open} declarations, providing a structured formal context that guides and constrains the generation process. This helps improve correctness and ensures compatibility with existing formal libraries. The output of this stage is a \emph{formal skeleton}, i.e. a Lean file containing the complete problem setup and theorem declarations, prepared for subsequent refinement.

\subsection{Error Fix and Proof Refinement}
\paragraph{Error Fix}
To address errors in automatically generated Lean code, we employ a feedback-driven correction framework implemented in the \emph{error fix} stage. This component integrates two complementary steps: a \emph{rule-based fixer}, which applies deterministic edits targeting common syntactic and structural issues, and an \emph{automatic fixer}, which leverages Lean’s type-checking diagnostics and a self-updating \emph{error knowledge base} to suggest and validate adaptive repairs.

The first step is a static, rule-based correction module responsible for addressing common syntactic and structural issues. It performs a series of deterministic transformations, including formatting declarations, standardizing overloaded or non-canonical notations, removing unused hypotheses or empty section blocks, and enforcing conventions consistent with Lean’s standard libraries. The correction rules are implemented through symbolic rewriting and pattern matching, allowing efficient and general-purpose repair prior to more context-sensitive analysis.

Complementing the static step is an iterative feedback-driven correction module powered by Lean’s type checker and retrieval-augmented \cite{lewis2020retrieval, surveyRAG2024Fan} prompt construction. Given an error message $e$, the system queries an evolving error knowledge base $\mathcal{K}$, which stores structured correction strategies, illustrative fix examples, and references to relevant theorems or tactics. The retrieved entries $\mathcal{K}(e)$ are assembled into a custom prompt tailored to the specific error and passed to the language model. For instance, if a definition involves incorrect usage of a term $T$, the prompt includes the full statement of $T$ along with common usage patterns. When the error involves an undefined or non-canonical lemma, the retrieval module suggests similar alternatives from Lean’s libraries, which are automatically incorporated into the prompt context.

After each correction attempt, the updated code is recompiled and rechecked by Lean. If new errors are encountered, the feedback loop continues. For each resolved errors, the system logs the original error message, faulty code, and corrected code. These logs are processed by a secondary model to generate new \emph{fix suggestions}. Both the logs and the suggestions contribute to updating $\mathcal{K}$. This feedback mechanism progressively improves correction capabilities by capturing and generalizing emerging error patterns. The iteration terminates once the file type-checks successfully or a predefined retry limit is reached. 

This hybrid mechanism, combining symbolic rewriting, retrieval-augmented generation, and dynamic knowledge adaptation, yields a robust correction system capable of transforming flawed or incomplete formalizations into valid Lean code. The iterative process can be viewed as an explicit form of chain-of-thought reasoning \cite{Wei2022CoT} in the context of autoformalization. By treating the sequence of errors, prompts, and corrections as a structured evolving context, the system enables the model to reflect on prior failures and refine its responses accordingly, resulting in more accurate and coherent formalizations.

\paragraph{Proof Refinement}

After obtaining correct formal definitions and instance lemma statements, the next objective is to eliminate all remaining \texttt{sorry} placeholders by constructing valid Lean proofs. To this end, a whole-proof generation pipeline is employed. For each \texttt{sorry}, the system extracts the corresponding local environment, including hypotheses, definitions, and contextual information, and feeds it into the language model. Then, the model attempts to generate a proof term appropriate for the given goal.

In most cases, the generated proof attempts to verify that a concrete instance satisfies the assumptions of an abstract lemma or theorem. While these subgoals are typically less complex than proving the full statement, they can still be challenging, particularly in domain-specific contexts. To address this, the system maintains a retry counter to regulate repeated attempts. The error knowledge base is also leveraged at this stage to guide correction when a generated proof fails to type-check. Relevant proof construction suggestions, known tactic patterns, and examples of common pitfalls are retrieved and integrated into the prompt to improve model robustness. This mechanism reuses the infrastructure developed in the error correction stage, adapted here for proof synthesis. If the maximum number of attempts is reached without success, the pipeline falls back to a partial completion strategy: the language model attempts to construct as much of the proof as possible, retaining \texttt{sorry} placeholders for unresolved fragments.

\subsection{Output Postprocessing}
In the final stage, the system compiles the formalization results into a clean, verifiable Lean file, free of type errors, extending the approach of \cite{ospanov2025apollo} to handle both statements and proofs. For files that still contain errors, a hybrid postprocessing strategy is applied. First, rule-based techniques are utilized to patch the proof context by inserting appropriate \texttt{sorry} placeholders where necessary. If this fails to produce a well-typed file, an LLM-based fallback rewrites the file into a harmless, error-free version. If all proof goals are successfully discharged, the system outputs a complete formal artifact. Otherwise, any remaining incomplete components are explicitly marked and logged for downstream analysis or future refinement.

To support interpretability and broader applicability, SITA integrates a \emph{backtranslator} module that maps formal Lean constructs, such as definitions, assumptions, and theorems, back into natural language. This step facilitates human verification, improves documentation, and enables data augmentation. The \emph{reconstruction} outputs are aligned with the original informal problem descriptions, laying the groundwork for training bi-directional models that bridge formal and informal mathematical reasoning.

All intermediate artifacts, including incomplete proofs, corrected code fragments, Lean error traces, and natural language reconstruction outputs, are stored in a structured dataset. This archive supports fine-grained evaluation of model performance and serves as a targeted training resource for advancing proof generation, error correction, and formal-informal alignment in future models.


\section{Numerical Experiments}
\label{sec:experiments}

\subsection{Experimental Setup}
To assess the effectiveness of the proposed SITA paradigm, we conduct experiments on a collection of research-level mathematical problems that naturally conform to the structure-to-instance formalization setting. A particularly representative class of such problems arises in optimization, where many concrete applications can be viewed as instantiations of abstract algorithmic frameworks. By formalizing these instances, we demonstrate that SITA can correctly and efficiently generate Lean definitions and proofs that link concrete cases with their underlying abstract structures. While our current experiments focus on optimization, the proposed framework is not domain-specific. It can be extended to other areas of mathematics where formal abstraction and reusable operational structure play a central role.

\paragraph{Dataset} We construct a benchmark dataset comprising 42 representative optimization problems collected from widely used textbooks in numerical optimization. The problems span a broad spectrum of settings, including both constrained and unconstrained formulations, as well as convex and nonconvex objectives, thereby covering key categories commonly studied in the optimization literature. Each problem in the dataset is selected for its reliance on a standard optimization algorithm, including gradient descent (GD), proximal gradient (PGM), Nesterov’s acceleration \cite{nesterov1983method}, block coordinate descent (BCD) \cite{bolte2014proximal}, and the alternating direction method of multipliers (ADMM) \cite{Fazel2013hankelmatrix}. These algorithms have been formally verified in the Lean library Optlib, and we build on their existing formalizations by adopting them as reusable, abstract components. More dataset information can be found in Appendix F. Here are some typical examples.
\begin{itemize}
\item Logistic regression (GD)
\begin{align*}
    \min_x \sum_{i=1}^m \log\bigl(1+e^{-b_i a_i^\top x}\bigr) + \lambda \|x\|_2^2.
\end{align*}
\item Sparse recovery in signal processing (PGM)
\begin{align*}
    \min_x \frac{1}{2} \|Ax-b\|^2 + \mu \|x\|_1.
\end{align*}
\item Total variation denoising (ADMM)
\begin{align*}
    \min_{x,z} \frac{1}{2} \|x-y\|^2 + \|z\|_1, \quad \text{s.t.} \quad  Dx=z.
\end{align*}
\end{itemize}

\paragraph{Base Model} We use DeepSeek-R1 \cite{deepseekai2025} and DeepSeek-V3 \cite{deepseekai2025deepseekv3} as base models in our autoformalization pipeline. Unlike theorem proving specific LLMs, our structure-to-instance autoformalization task requires not only formal reasoning but also understanding informal language, identifying abstract structures, and generating Lean definitions and proofs. These models are chosen for their strong language understanding and multi-step instruction-following abilities. Model hyperparameters are listed in Appendix D.3.




\subsection{Results and Analysis}
With no former autoformalization work related to whole file generation, we propose the following four aspects to evaluate the output: (1) Definition (syntax-correct definitions without sorry); (2) Theorem (syntax-correct theorem statements, possibly with sorry); (3) Instance (syntax-correct instance declarations, possibly with sorry); (4) Full file (considering all four aspects above). To obtain the score, we interact with the Lean environment to collect the corresponding error messages, assigning each message to its associated definition, theorem, or instance. The success ratio is defined as the proportion of syntactically correct definitions, theorem statements, and instance declarations relative to their respective totals. Besides evaluation through type check, majority voting is also used to examine the semantic correctness. We follow the settings in \cite{liu2025rethinking} and use DeepSeek-V3 with temperature $T=0.7$ with 16 rounds\footnote{Detailed prompt templates are provided in Appendix D.2.}. The model is required to rate the output from the aspects of problem formalization, algorithm correctness, update scheme explicitness, theoretical analysis and proof completion rate. The score ranges from 0 to 100.

We compare the completion rate of SITA under two different base models with that of direct generation. Existing proof generation and autoformalization models are not designed to handle the structured formalization task addressed in our work, which involves generating definitions, instances, and theorems in a unified pipeline. To the best of our knowledge, there is currently no dedicated method capable of performing such structure-to-instance autoformalization. Therefore, we limit our comparison to general-purpose LLMs as a baseline for assessing the effectiveness of our approach. The results are shown in Table~\ref{table:completion}. All the results are under generation with 3 attempts and fix iteration with 3 iterations. A case study of both successful and failed cases generated by SITA is provided in Appendix G.1. More detailed results are given in Appendix H.

\begin{table}[htbp]
\centering
\begin{tabular}{lccccc}
\toprule
\textbf{Model} & \textbf{Def} & \textbf{Thm} & \textbf{Instance} & \textbf{File} & \textbf{MV} \\
\midrule
Direct-V3 & 27.9\% &  28.0\%& 22.8\%
 & 0.0\% & 50.2\\
Direct-R1 &62.8\% &  25.6\%&25.7\%& 0.0\%& 46.0 \\
SITA-V3 & 91.0\% & 86.7\%& 90.8\% & 27.2\% & 66.1 \\
SITA-R1 & 93.8\% &95.6\% & 95.4\% & 57.14\% & 76.9\\
\bottomrule
\end{tabular}
\caption{Formalization completion rate comparison.  \textbf{Direct-V3}: direct generation with DeepSeek-V3. \textbf{Direct-R1}: direct generation with DeepSeek-R1. \textbf{SITA-V3}: our framework using DeepSeek-V3.  \textbf{SITA-R1}: our framework using DeepSeek-R1. \textbf{Def}, \textbf{Thm}, \textbf{Instance} and \textbf{File} denotes four criteria for the evaluation of entire file generation. \textbf{MV}: results from majority voting.}
\label{table:completion}
\end{table}

As shown in Table~\ref{table:completion}, our proposed SITA framework significantly outperforms the direct generation baselines across all evaluation dimensions. Notably, SITA-R1 achieves an overall file-level success rate of 57.14\%. In contrast, direct generation methods, whether using DeepSeek-V3 or DeepSeek-R1, fail to produce any fully correct formalization at the file level, showing the substantial difficulty of end-to-end file level automatic formalization. One of the challenge lies in effectively leveraging newly synthesized concepts to instantiate appropriate type-class patterns and formulate theorems. Although current models are partly capable of generating definitions, they often fall short when integrating these definitions into semantically coherent lemmas or instances.

For those files fail to generate fully correct files, we examine the success rate of the statement generation. In addition to analyzing definitions and statements, we also study the generation of proofs in cases where the files are correctly generated. Proofs are evaluated only when the generation of definitions and statements succeeds. This is because Lean’s type-checking mechanism cannot reliably report errors in proofs if the definitions or statements contain type errors. As shown in Table~\ref{table: problem class}, we report two key metrics: the syntactic correctness rate of the definitions and statements in files that contain generation errors (SC), and the proof success rate in files that are generated without any syntax-level problems (PS). We report statistics including the number of definitions and theorems, and average file length for each class. 

\begin{table}[htbp]
\centering
\begin{tabular}{lccccc}
\toprule
\textbf{Class} & \textbf{SC} & \textbf{PS} & \textbf{DM}  & \textbf{TM} & \textbf{FL} \\
\midrule
GD         &  83.36\%  &   53.77\%  &   4    &  7    &     96 \\
PGM        &  98.1\%   &  62.96\%  &   6    &   9  &  116  \\
Nesterov   &  97.8\%    &   63.28\%       & 6    &   8    & 119     \\
BCD        &  88.82\%    &  50.55\%    &  9   & 17      & 194     \\
ADMM       &  85.02\%    &  20.00\%    & 6    & 8      &  121     \\
\midrule
\textbf{Overall} & 90.72\%  & 51.23\%   & 6.21 & 9.93   & 129.83     \\
\bottomrule
\end{tabular}
\caption{Syntactic correctness rate and proof completion rate. 
\textbf{SC}: syntactic correctness rate of statements in the failed cases; 
\textbf{PS}: proof completion rate in the success cases;
\textbf{DM}: total number of definitions;
\textbf{TM}: total number of lemmas and theorems; 
\textbf{FL}: average file length (lines).}
\label{table: problem class}
\end{table}

From the results, we observe that even among the files that are not fully correct, a significant portion of their component are still syntactically valid. 

This suggests that most of the errors are concentrated in a small subset of complex or ambiguous definitions or statements, while the majority of the content is already valid and usable. 
Therefore, with SITA as the backbone, the cost of human intervention can be reduced. Users only need to focus on refining a few critical pieces rather than writing the entire file from scratch.

On the other hand, for files that are syntactically correct, the proof success rate exceeds 50\% in the simpler classes. This indicates that the model is capable of completing a substantial portion of proofs autonomously, especially for elementary or structurally well-defined lemmas. The remaining failures are mostly associated with intricate reasoning steps or subtle dependencies, which are inherently harder for current models to resolve without fine-grained guidance.  

\paragraph{Generated Formal Problems}
From this structure to instance autoformalization procedure, we generate 88 correctly complied files and 449 theorems and 222 of them are with correct proofs. Most of these problems concentrates on the properties of concrete functions, such as the convexity, the Lipschitz continuity, or the KL property. We reorganize the file and obtain a benchmark Opt-bench consisting of formalized problems focus on analysis and optimization.

\subsection{Ablation Study}
We conduct ablation studies to understand each component’s contribution in the SITA framework. More concrete results can be found in Appendix H.
\paragraph{Procedure Ablation Study} We study the effect of key modules in the SITA pipeline. As shown in Table~\ref{table:ablation-pipeline}, removing the linking error recovery module leads to only a moderate drop in both syntactic correctness and proof success, suggesting that the model possesses a certain capacity to adapt based on Lean’s error messages. Retrieval-augmented error correction contributes to improves the stability and precision of error correction. Furthermore, disabling the proof refinement stage is associated with a decrease in performance, indicating that this component plays a role in improving partially correct roofs. The increase in file length also validates the effectiveness of proof refinement. In addition, removing in-class examples from the prompt (replaced with generic examples) leads to a decline in all metrics, demonstrating that in-context alignment plays a vital role in guiding the generation toward syntactically correct and semantically aligned outputs.

\begin{table}[htbp]
\centering
\begin{tabular}{lcccc}
\toprule
\textbf{Configuration}& \textbf{FS (\%)} & \textbf{SC (\%)} & \textbf{PS (\%)} & \textbf{FL} \\
\midrule
w/o example & 9.5\% &  74.1 \% &  42.5\% & 100\\
w/o link errs & 45.8 \%&  83.4 \%&  49.8 \%  & 131\\
w/o proof refine &  57.14\% &90.7\%& 23.45\% & 119 \\
Full pipeline &  57.14\%& 90.7\% & 51.23\%   & 134   \\
\bottomrule
\end{tabular}
\caption{Ablation study of pipeline stages.
\textbf{FS}: File success compilation rate;
\textbf{w/o example}: pipeline without an exemplar within the same category in the prompt, uses general examples instead. 
\textbf{w/o link errs}: the pipeline without correction tips retrieval from error knowledge base;
\textbf{w/o proof refine}: the pipeline without secondary proof generation.}
\label{table:ablation-pipeline}
\end{table}

\begin{figure}[htbp]
    \centering
    \includegraphics[width=0.7\linewidth, height = 0.38\linewidth]{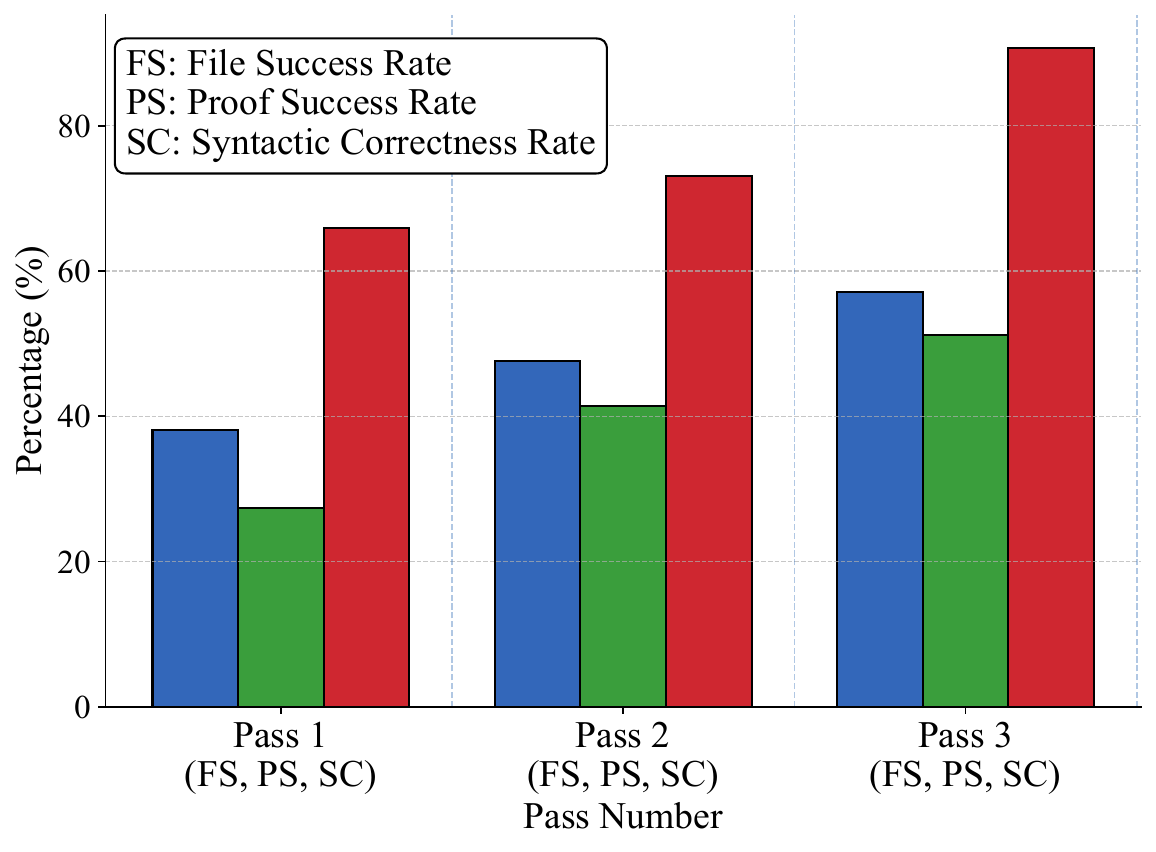}
    \caption{Evaluation performance across generation passes.}
    \label{fig:pass}
\end{figure}

\paragraph{Time Consumption of Each Part} 
We also compare the time consumption of each part. Correction steps dominate the runtime: the corrections for the backbone and the corrections for the proofs account for 56.5 \% and 22.5 \%. While the remaining three stages, harmless fixing, generation, and proof generation, each consume roughly the same, comparatively minor share of the total time. 

\paragraph{Generation Pass Ablation Study}
We study the impact of the number of pass rounds on three evaluation metrics: file success rate, SC rate and the PS rate. As shown in Figure~\ref{fig:pass}, all three metrics improve with the number of passes. This indicates that the number of passes can influence the overall quality and correctness of the generated outputs.

\section{Conclusion}
\label{sec:conclusion}

We propose SITA, a novel pipeline for structure-to-instance theorem autoformalization, introducing a new paradigm of structured mathematical reasoning for research level mathematics. By aligning abstract formal structures with concrete problem instances, our pipeline enables scalable, modular proof reuse. Experiments on diverse optimization problems show its effectiveness in generating correct formal definitions, instance declarations, and verified proofs. This framework paves the way for reusable, verified libraries and structured datasets to support future advancements.

\section*{Acknowledgments}
Z. Wen was supported in part by National Key Research and Development Program of China under the grant number 2024YFA1012903, the National Natural Science Foundation of China under the grant numbers 12331010 and 12288101, and the Natural Science Foundation of Beijing, China under the grant number Z230002.

\bibliography{aaai2026}

\newpage
\appendix

\section{Mathematical Formalization Using Lean}
Lean is an interactive theorem prover and programming language based on dependent type theory, developed to support formalized mathematics for mathematicians. Lean’s core library, mathlib, 
offers an extensive and growing repository of formalized mathematics. It includes over twenty thousands of definitions and theorems across a wide range of mathematical domains, designed with reusability and formal rigor. Lean follows the propositions-as-types principles, where propositions are defined as types and proofs as terms inhabiting those types. To prove a theorem in Lean, one can construct a term of the corresponding type using previously proven theorems and definitions.

Another a widely used proving style in the Lean theorem prover is the tactic proof. Unlike term-style proofs, which construct a proof object directly, tactic proofs proceed by incrementally transforming the current state using a sequence of commands called tactics. Each tactic operates on the current proof state, which comprises the local context of assumptions and the goal to be proven. Tactics include introducing variables (intro), destructuring assumptions (rcases), applying known results (apply, exact), or simplifying expressions (simp, rw). This step-by-step manipulation continues until all subgoals are resolved, typically by matching them with assumptions or reducing them to trivially true statements. Tactic proofs closely mirror informal mathematical reasoning and offer both clarity and flexibility during formal development. This proving style plays a central role in our formal verification process.

In Lean, type classes provide a powerful abstraction mechanism that supports modular, reusable, and structured formalizations. Type classes are used to encode algebraic structures (such as groups or vector spaces). They are also equally effective for organizing algorithmic frameworks, such as families of first-order optimization methods. A type class in Lean can be viewed as a named collection of definitions and assumptions, including both data (e.g., parameters or functions) and properties (e.g., positivity, well-posedness). This allows for concise theorem statements and enables automatic instance inference, where Lean can automatically fill in relevant structures based on context.

An instance is a declaration that a particular object or structure satisfies the requirements of a given type class in Lean. While a type class specifies a collection of data and properties, effectively describing a certain interface or set of assumptions, an instance provides a concrete implementation that fulfills those requirements. In the context of formalizing algorithms, instances allow users to encode the data and assumptions specific to a problem in a structured and reusable way. General theorems can then be applied to any concrete instance that satisfies the necessary type class, enabling a high degree of automation and abstraction in proofs.

Optlib is a Lean 4 library for formalizing key concepts and results in mathematical optimization. It includes definitions and formal proofs related to convex functions, subgradients, first-order optimality conditions, Lagrangian duality, tangent cones, and KKT conditions. The library also contains formal convergence proofs for classic algorithms such as gradient descent and block coordinate descent. Optlib provides a structured foundation for rigorous reasoning about optimization theory within the Lean proof assistant. Our experiments mainly builds on this repository.

\section{Mathematical Structures with Algorithm in Different Mathematical Realms}
There are also many examples of mathematical structures equipped with operations beyond the domain of optimization methods. Here is an example of abstract algebra.
\begin{example} 
Euclidean division on a Euclidean domain is given as:
\(
  \mathcal{S}_{\mathrm{ED}} = \bigl\langle
    \mathcal{D},\,\mathcal{O},\,\mathcal{C},\,\mathcal{T}
  \bigr\rangle.
\)
\begin{itemize}
  \item \(\mathcal{D}\):  
  A Euclidean domain \( R \), equipped with a Euclidean valuation  
  \( \delta: R \setminus \{0\} \to \mathbb{N} \).
  \item \(\mathcal{O}\): The division on the Euclidean domain is given as follows. For any $a, b \in R$ with $b \ne 0$, find $q, r \in R$ such that $a = bq + r$, with either $r = 0$ or $\delta(r) < \delta(b)$.
  \item \(\mathcal{C}\):  
  The function \( \delta \) satisfies the Euclidean property, ensuring that such a division is always possible. 
  \item \(\mathcal{T}\): Theorems include:
  \begin{itemize}
    \item Existence and uniqueness of \( q, r \);
    \item Termination of the Euclidean algorithm;
    \item Properties of the greatest common divisor;
  \end{itemize}
\end{itemize}
\end{example}
Euclidean division can be viewed as an operation defined on a Euclidean domain. This framework admits concrete applications in a variety of algebraic structures, such as the natural numbers, Gaussian integers, and univariate polynomial rings over a field, each equipped with their respective division operations. There are also examples in graph theory.

\begin{example}
Graph traversal on a finite graph is defined as:
\begin{itemize}
  \item \(\mathcal{D}\):  A finite graph \( G = (V, E) \), where $V$  denotes the set of vertices and and $E$ the set of edges. 
  \item \(\mathcal{O}\):  
  A traversal algorithm starting from certain node. The algorithm explores vertices in layers.

  \item \(\mathcal{C}\):  
  The graph and the traversal procedure satisfies:
  \begin{itemize}
    \item The graph is undirected and connected;
    \item Every node is visited at most once;
    \item All nodes reachable from the source are finally visited.
  \end{itemize}

  \item \(\mathcal{T}\): Theorems include the time complexity analysis of the certain algorithms.
\end{itemize}
\end{example}
The finite graph can be instantiated as a specific concrete graph, and the corresponding traversal algorithm can be chosen to suit the context, such as breadth-first search or depth-first search, depending on the desired exploration strategy.

Such mathematical structures are widely used across different areas of mathematics, as they provide a systematic way to define domains, specify operations, impose constraints, and derive key properties within a unified and reusable framework.

\section{Illustrative Examples for SITA Problem}
\subsection{PGM Applied to the Lasso Problem}
The first part of the code represents the formalization of the abstract structure. The proof of the abstract structure is obtained from the library Optlib. The general formal definition of composite optimization problem and proximal gradient method is in abstract form and defined on Hilbert spaces, which is a generalization of $\mathbb{R}^n$.
\begin{lstlisting}
variable {E : Type*} [NormedAddCommGroup E] [InnerProductSpace ℝ E] [CompleteSpace E] [FiniteDimensional ℝ E]
class composite_pro (f : E → ℝ) (h : E → ℝ)
def composite_pro.target (_ : composite_pro f h) := f + h
class pg (pro : composite_pro f h) (x₀ : E) :=
  t : ℝ
  x : ℕ → E
  update : ∀ k : ℕ, prox_prop (t • h) (x k - t • (gradient f) (x k)) (x (k + 1))
  initial : x 0 = x₀
\end{lstlisting}
Given appropriate assumptions $\mathcal{C}$ (provided in the parentheses), we obtain the formalized convergence theorem $\mathcal{T}$.
\begin{lstlisting}
theorem pg_converge
  (xm : E) (L : NNReal)
  (fconv : ConvexOn ℝ univ f) (hconv : ConvexOn ℝ univ h)
  (h₁ : Differentiable ℝ f) (h₂ : LipschitzWith L (gradient f))
  (tpos : 0 < alg.t) (step : alg.t ≤ 1 / L) (hL : L > (0 : ℝ)) :
  ∀ (k : ℕ+), (pro.target (alg.x k) - pro.target xm)
    ≤ 1 / (2 * k * alg.t) * ‖x₀ - xm ‖ ^ 2 := by sorry
\end{lstlisting}

This completes a full formalized example of the abstract structure $\mathcal{S}$. In what follows, the formalized code serves as the expected output for LLM generation. The desired instance formalization $\mathbb{I}$ for the Lasso problem includes defining the definitions $\mathcal{D}$ and operation $\mathcal{O}$ as follows:
\begin{lstlisting}
class Lasso_pro {m n : ℕ} (A : Matrix (Fin m) (Fin n) ℝ) (b : (Fin m) → ℝ) (mu : ℝ) where
  (hA : A ≠ 0)
  (hmu : mu > 0)

variable {A : Matrix (Fin m) (Fin n) ℝ} {b : Fin m → ℝ} {mu : ℝ}

def Lasso_pro.f (_ : Lasso_pro A b mu) : EuclideanSpace ℝ (Fin n) → ℝ :=
  fun x ↦ 1 / 2 * ‖ A *ᵥ x - b ‖₂ ^ 2

def Lasso_pro.g (_ : Lasso_pro A b mu) : EuclideanSpace ℝ (Fin n) → ℝ :=
  fun x ↦ mu * ‖ x ‖₁

def Lasso_pro.target (self : Lasso_pro A b mu) : EuclideanSpace ℝ (Fin n) → ℝ :=
  fun x ↦ self.f x + self.g x
  
class pg_Lasso (pro : Lasso_pro A b mu) (x₀ : EuclideanSpace ℝ (Fin n)) where
  t : ℝ
  x : ℕ → EuclideanSpace ℝ (Fin n)
  y : ℕ → EuclideanSpace ℝ (Fin n)
  ht : t > 0
  update1 : ∀ k : ℕ,
    let grad : EuclideanSpace ℝ (Fin n) := Aᵀ *ᵥ (A *ᵥ x k - b)
    y k = x k - t • grad
  update2 : ∀ (k : ℕ), ∀ i, x (k + 1) i = (Real.sign (y k i) * (max (abs (y k i) - t * mu) 0))
  initial : x 0 = x₀
\end{lstlisting}
Note that the update form for proximal gradient method on Lasso problem is explicitly written. The proximal operator of the $\ell_1$ norm and the gradient is calculated. Hence these definitions is defined separately from the general definitions. The key to linking the abstract structure with the concrete example lies in the following instances. These instance theorems assert that the concrete definitions and operations are consistent with their abstract counterparts by mapping bundled variables and proving required properties.
\begin{lstlisting}
instance Lasso_pro.composite_pro (self : Lasso_pro A b mu) :
    composite_pro self.f self.g where

instance pg_Lasso.pg (self : pg_Lasso pro x₀) : pg (Lasso_pro.composite_pro pro) x₀ where
  t := self.t
  x := self.x
  initial := self.initial
  update := by sorry
\end{lstlisting}
In the \texttt{update} part, we prove that the update scheme defined in \texttt{pg\_Lasso} coincides with the general definitions in \texttt{pg} class. To prove the final convergence theorem, we need to first formalize some theorems based on the conditions in the theorem \texttt{pg\_converge}. Some of these are given as below.
\begin{lstlisting}
lemma Lasso_problem.ConvexOn_f (self : Lasso_problem A b mu) :
    ConvexOn ℝ Set.univ self.f  := by
\end{lstlisting}
\begin{lstlisting}
lemma Lasso_problem.lip_f (self : Lasso_problem A b mu) :
    LipschitzWith self.l (gradient self.f) := by
\end{lstlisting}
Other conditions needed are given in lemmas named \texttt{pro.ConvexOn\_g}, \texttt{pro.diff\_f}. 
By applying the instances and the verified assumptions in the abstract theorem, we obtain the corresponding convergence result for the Lasso problem. 
\begin{lstlisting}
theorem Lasso_convergence (alg : pg_Lasso pro x₀)
    (xm : EuclideanSpace ℝ (Fin n))
    (ht2 : alg.t ≤ 1 / pro.l):
    ∀ (k : ℕ+), (pro.target (alg.x k) - pro.target xm)
      ≤ 1 / (2 * k * alg.t) * ‖ x₀ - xm ‖ ^ 2 := by
intro k
  apply proximal_gradient_converge (alg := alg.proximal_gradient_method)
    xm pro.l pro.ConvexOn_f pro.ConvexOn_g pro.diff_f pro.lip_f alg.ht ht2 pro.lpos k
\end{lstlisting}
We can see that the convergence of the proximal gradient applied on Lasso problem can be simply proved from the tactics above, given some properties about the target functions proved (e.g. in \texttt{pro.ConvexOn\_f} and \texttt{pro.lip\_f}). 

This simple example of the autoformalization task of SITA illustrates the core concept, which remains consistent even in more complex real-world scenarios. In these scenarios, the target function and update scheme become more intricate. The fundamental idea behind autoformalization is to leverage the information and structure present in the template file to guide the reasoning process and facilitate the generation of structure-to-instance theorems.

\subsection{Lean Formalization with its Corresponding Natural Language Back-translation}

Based on SITA, we aim to generate a complete natural language report for a given optimization problem via structure-to-instance formalization. This extends the utility of autoformalization by bridging formal code with natural language. The following presents a fully formalized generation of a wavelet decomposition model, which will be used for reverse translation and evaluation.

\begin{lstlisting}
open Set Real Matrix Finset Filter Bornology BigOperators Topology Classical

noncomputable section WAVELET

local notation "‖" x "‖₂" => @Norm.norm _ (PiLp.instNorm 2 fun _ ↦ ℝ) x
local notation "‖" x "‖₁" => @Norm.norm _ (PiLp.instNorm 1 fun _ ↦ ℝ) x
local notation "|‖" A "|‖" => ‖(Matrix.toEuclideanLin ≪≫ₗ LinearMap.toContinuousLinearMap) A‖₊

variable {m n : ℕ} {M : Matrix (Fin m) (Fin n) ℝ} {b : Fin m → ℝ} {lam : Fin n → ℝ}

class Wavelet_model (M : Matrix (Fin m) (Fin n) ℝ) (b : Fin m → ℝ) (lam : Fin n → ℝ) where
  hlam : ∀ i, lam i ≥ 0

def Wavelet_model.f (pro : Wavelet_model M b lam) : EuclideanSpace ℝ (Fin n) → ℝ :=
  fun d ↦ 1 / 2 * ‖ M *ᵥ d - b ‖₂ ^ 2

def Wavelet_model.g (pro : Wavelet_model M b lam) : EuclideanSpace ℝ (Fin n) → ℝ :=
  fun d ↦ ‖ fun i ↦ lam i * d i ‖₁

def Wavelet_model.target (pro : Wavelet_model M b lam) : EuclideanSpace ℝ (Fin n) → ℝ :=
  pro.f + pro.g

def Wavelet_model.l (pro : Wavelet_model M b lam) : NNReal := |‖Mᵀ * M|‖

instance Wavelet_model.composite_problem (pro : Wavelet_model M b lam) :
    composite_problem pro.f pro.g where

class Nesterov_wavelet (pro : Wavelet_model M b lam) (x₀ : EuclideanSpace ℝ (Fin n)) where
  hl : pro.l > (0 : ℝ)
  x : ℕ → EuclideanSpace ℝ (Fin n)
  y : ℕ → EuclideanSpace ℝ (Fin n)
  w : ℕ → EuclideanSpace ℝ (Fin n)
  t : ℕ → ℝ
  γ : ℕ → ℝ
  oriy : y 0 = x 0
  initial : x 0 = x₀
  teq : ∀ n : ℕ, t n = 1 / pro.l
  γeq : ∀ n : ℕ, γ n = 2 / (2 + n)
  update1 : ∀ k : ℕ+, y k = x k + (γ k * (1 - γ (k - 1)) / (γ (k - 1))) • (x k - x (k - 1))
  update2 : ∀ k : ℕ,
    let grad : EuclideanSpace ℝ (Fin n) := Mᵀ *ᵥ (M *ᵥ y k - b)
    w k = y k - t k • grad
  update3 : ∀ k : ℕ, x (k + 1) =
    fun i ↦ Real.sign (w k i) * (max (|w k i| - t k * lam i) 0)

variable {pro : Wavelet_model M b lam} {x₀ : EuclideanSpace ℝ (Fin n)}

lemma Wavelet_model.hasGradient (pro : Wavelet_model M b lam) :
    ∀ d, HasGradientAt pro.f (Mᵀ *ᵥ (M *ᵥ d - b)) d := by
  apply affine_sq_gradient

lemma Wavelet_model.gradient_f (pro : Wavelet_model M b lam) :
    ∀ d, gradient pro.f d = Mᵀ *ᵥ (M *ᵥ d - b) := by
  exact fun d ↦ HasGradientAt.gradient (pro.hasGradient d)

lemma Nesterov_wavelet.update_cor (self : Nesterov_wavelet pro x₀) :
    ∀ (k : ℕ), prox_prop (self.t k • pro.g) (self.y k - self.t k • gradient pro.f (self.y k)) (self.x (k + 1)) := by
  sorry

instance 
Nesterov_wavelet.Nesterov_first_fix_stepsize (self : Nesterov_wavelet pro x₀) :
    Nesterov_first_fix_stepsize (Wavelet_model.composite_problem pro) x₀ where
  hl := self.hl
  x := self.x
  y := self.y
  t := self.t
  γ := self.γ
  oriy := self.oriy
  initial := self.initial
  teq := self.teq
  γeq := self.γeq
  update1 := self.update1
  update2 := self.update_cor

lemma Wavelet_model.ConvexOn_f (pro : Wavelet_model M b lam) :
    ConvexOn ℝ univ pro.f := by
  unfold Wavelet_model.f
  exact affine_sq_convex M b

lemma Wavelet_model.ConvexOn_g (pro : Wavelet_model M b lam) :
    ConvexOn ℝ univ pro.g := by
  sorry

lemma Wavelet_model.diff_f (pro : Wavelet_model M b lam) :
    Differentiable ℝ pro.f := by
  exact fun x ↦ HasGradientAt.differentiableAt (pro.hasGradient x)

lemma Wavelet_model.lip_f (pro : Wavelet_model M b lam) :
    LipschitzWith pro.l (gradient pro.f) := by
  rw [lipschitzWith_iff_norm_sub_le]; intro x y
  rw [pro.gradient_f, pro.gradient_f]
  rw [← Matrix.mulVec_sub, ← sub_add, sub_add_eq_add_sub, sub_add_cancel]
  rw [← Matrix.mulVec_sub]
  simp
  apply Matrix.l2_opNorm_mulVec (Mᵀ * M)

theorem wavelet_convergence (alg : Nesterov_wavelet pro x₀)
    (xm : EuclideanSpace ℝ (Fin n)) (minφ : IsMinOn pro.target univ xm) :
    ∀ (k : ℕ), pro.f (alg.x (k + 1)) + pro.g (alg.x (k + 1)) - pro.f xm - pro.g xm ≤
    2 * pro.l / (k + 2) ^ 2 * ‖x₀ - xm‖ ^ 2 := by
  apply Nesterov_converge (alg := alg.Nesterov_first_fix_stepsize)
    pro.l alg.hl pro.diff_f pro.ConvexOn_f pro.lip_f pro.ConvexOn_g minφ

end WAVELET
\end{lstlisting}

\paragraph{Back-translation}
We employ DeepSeek-R1 to back-translate the above Lean formalization into a LaTeX-formatted technical report describing Nesterov's first method applied to the wavelet decomposition model. The prompt explicitly instructs the model to preserve all definitions, lemmas, theorems, and proof structures; translate tactic-based proofs into natural language reasoning; and format the output using standard LaTeX environments without introducing custom commands. The following section was fully generated by the model based on these constraints.

\begin{definition}
A \textbf{Wavelet decomposition model} is defined for a matrix $M \in \mathbb{R}^{m \times n}$, a vector $b \in \mathbb{R}^m$, and coefficients $\lambda \in \mathbb{R}^n$ satisfying $\lambda_i \geq 0$ for all $i$. The components are:
\begin{align*}
f(d) &:= \frac{1}{2} \|M d - b\|_2^2 \\
g(d) &:= \|\lambda \odot d\|_1 \\
\varphi(d) &:= f(d) + g(d) \\
L &:= \|M^\top M\|_{op}
\end{align*}
where $\odot$ denotes componentwise multiplication and $\|\cdot\|_{op}$ is the operator norm.
\end{definition}

\begin{definition}
The \textbf{Nesterov first fixed-stepsize algorithm} for wavelet decomposition initializes at $x_0$ and iterates:
\begin{align*}
t_k &:= 1/L \\
\gamma_k &:= 2/(2+k) \\
y_k &:= x_k + \frac{\gamma_k (1 - \gamma_{k-1})}{\gamma_{k-1}} (x_k - x_{k-1}) \quad (k \geq 1) \\
w_k &:= y_k - t_k (M^\top (M y_k - b)) \\
x_{k+1}(i) &:= \mathrm{sign}(w_k(i)) \cdot \max(|w_k(i)| - t_k \lambda_i, 0)
\end{align*}
with initial conditions $y_0 = x_0$ and $\gamma_{-1}$ unused.
\end{definition}

\begin{lemma}
The function $f$ has gradient $\nabla f(d) = M^\top (M d - b)$ at every $d$.
\end{lemma}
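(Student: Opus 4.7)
The plan is to compute the gradient of $f(d) = \tfrac{1}{2}\|Md - b\|_2^2$ directly by applying the chain rule to its decomposition as the composition of the squared Euclidean norm with the affine map $d \mapsto Md - b$. Since each of these two ingredients has a textbook derivative, the result will follow by straightforward composition.

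First I would note that the affine map $\varphi(d) := Md - b$ has Fr\'echet derivative $h \mapsto Mh$ at every point, and that the function $\psi(y) := \tfrac{1}{2}\|y\|_2^2 = \tfrac{1}{2}\langle y, y\rangle$ has derivative $\psi'(y)[h] = \langle y, h\rangle$ and gradient $y$. Applying the chain rule, the Fr\'echet derivative of $f = \psi \circ \varphi$ at $d$ is the linear functional $h \mapsto \langle Md - b, Mh\rangle$. Using the adjoint identity $\langle u, Mh\rangle = \langle M^\top u, h\rangle$ with $u = Md - b$, this rewrites as $h \mapsto \langle M^\top(Md - b), h\rangle$, so the Riesz representative -- and hence the gradient -- is exactly $M^\top(Md - b)$.

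In the Lean formalization this entire argument is encapsulated by the library lemma \texttt{affine\_sq\_gradient} from Optlib, which establishes the gradient formula for any squared affine residual; a single application after unfolding $f$ discharges the goal, matching the one-line tactic proof sketched in the code block above. The main obstacle, while conceptually mild, is the careful management of identifications in the formalized setting: one must track the distinction between the Fr\'echet derivative as a continuous linear map and the gradient as its Riesz dual, and verify that the adjoint of the linear map induced by the matrix $M$ is represented by $M^\top$ under the Euclidean inner product on $\mathbb{R}^n$. Once these conventions are in place the calculation itself is essentially automatic, which is precisely why packaging it as a reusable lemma in the abstract structure pays off under the SITA paradigm.
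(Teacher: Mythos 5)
Your proposal is correct and matches the paper's proof: the paper discharges this lemma by a single application of the Optlib lemma \texttt{affine\_sq\_gradient} after unfolding $f$, which is exactly the route you identify, and your chain-rule-plus-adjoint computation is precisely the mathematical content that library lemma packages. No discrepancy to report.
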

\begin{proof}
By application of the affine square gradient lemma.
\end{proof}

\begin{lemma}
The proximal update step satisfies:
\[
x_{k+1} = \mathrm{prox}_{t_k g}(y_k - t_k \nabla f(y_k))
\]
\end{lemma}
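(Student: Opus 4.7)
The plan is to unfold the definition of \texttt{prox\_prop} on both sides, rewrite $y_k - t_k\,\nabla f(y_k)$ into the vector $w_k$ using the gradient lemma for the quadratic term, and then reduce the claim to a coordinate-wise proximal identity for the weighted $\ell_1$ norm. Concretely, I would first invoke \texttt{Wavelet\_model.gradient\_f} to replace $\mathrm{gradient}\;pro.f\;(y_k)$ by $M^\top(M y_k - b)$, which by \texttt{update2} is exactly $w_k$. After this substitution the goal becomes $\mathrm{prox}_{t_k g}(w_k) \ni x_{k+1}$, where $g(d)=\|\lambda \odot d\|_1 = \sum_i \lambda_i |d_i|$ thanks to $\lambda_i \ge 0$.

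Next I would exploit the separability of $g$ and of the squared Euclidean distance: since both split coordinatewise, the proximal problem decomposes into $n$ scalar problems of the form $\min_{u} \tfrac{1}{2}(u - w_k(i))^2 + t_k\lambda_i |u|$. This scalar minimizer is the classical soft-thresholding operator, whose closed form is $\mathrm{sign}(w_k(i))\cdot\max(|w_k(i)| - t_k\lambda_i,\,0)$, matching \texttt{update3} componentwise. So after reducing to coordinates, the lemma follows from a named soft-thresholding lemma in Optlib (or a short direct computation distinguishing the cases $w_k(i) > t_k\lambda_i$, $|w_k(i)| \le t_k\lambda_i$, and $w_k(i) < -t_k\lambda_i$, checking the first-order optimality condition $0 \in u - w_k(i) + t_k\lambda_i\,\partial|u|$).

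The main obstacle I expect is the bookkeeping around Lean's \texttt{prox\_prop} predicate and the separability step: \texttt{PiLp}/\texttt{EuclideanSpace} instances force one to unfold norms carefully, rewrite the $\ell_1$ norm as a finite sum $\sum_i \lambda_i |d_i|$, and then commute the minimization with the sum to obtain an independent minimum in each coordinate. Getting the non-negativity hypothesis \texttt{pro.hlam} propagated through the absolute values (so that $|\lambda_i| = \lambda_i$) and aligning the scalar soft-thresholding identity with the exact form produced by \texttt{update3} (including the \texttt{Real.sign} versus \texttt{Real.sign} of zero edge case) is where most of the work lies. Once that coordinate reduction is in place, each scalar instance is a routine convex calculus check that can be discharged by \texttt{nlinarith}, \texttt{rcases} on sign, or an existing lemma such as a soft-thresholding characterization already in Optlib.
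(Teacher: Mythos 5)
The paper does not actually prove this lemma: in the Lean source it is \texttt{Nesterov\_wavelet.update\_cor}, whose body is \texttt{sorry}, and the back-translated report explicitly records it as ``Proof incomplete: Proximal operator characterization for $\ell_1$-norm.'' So there is no paper proof to compare against; your proposal is an attempt to fill a gap the authors left open. That said, your plan is the standard and mathematically correct argument: rewrite $y_k - t_k\nabla f(y_k)$ as $w_k$ via the gradient lemma and \texttt{update2}, use separability of both $g(d)=\sum_i \lambda_i\lvert d_i\rvert$ and the squared Euclidean distance to decouple the prox into $n$ scalar problems, and identify each scalar minimizer with the soft-thresholding formula produced by \texttt{update3}. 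You also correctly identify where the formalization pain lives (unfolding \texttt{prox\_prop}, the \texttt{PiLp} norm as a finite sum, propagating $\lambda_i\ge 0$, and the $\mathrm{sign}(0)$ edge case, which is harmless since the formula then returns $0=\mathrm{prox}(0)$).

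One point where the plan is optimistic rather than wrong: the final scalar step is not a pure sign case split dischargeable by \texttt{nlinarith}. Showing that $u^\ast=\mathrm{sign}(w)\max(\lvert w\rvert-t\lambda,0)$ is the \emph{global} minimizer of $u\mapsto\tfrac12(u-w)^2+t\lambda\lvert u\rvert$ requires either the subgradient optimality condition $0\in u^\ast-w+t\lambda\,\partial\lvert u^\ast\rvert$ together with convexity of the scalar objective, or an explicit inequality valid for all $u$ (e.g.\ completing the square plus $\lvert u\rvert\ge \mathrm{sign}(u^\ast)u$ on the relevant branch); the commuting of minimization with the sum likewise needs the observation that a coordinatewise minimizer of a separable sum is a global minimizer. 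These are routine convex-analysis facts, but they are the actual content of the lemma and should be stated as such (ideally as a reusable scalar soft-thresholding lemma, which would also discharge the analogous \texttt{sorry} in the Lasso instance's \texttt{update} field). With that caveat, the approach is sound.
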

\begin{proof}
Proof incomplete: Proximal operator characterization for $\ell_1$-norm.
\end{proof}

\begin{lemma}
The function $f$ is convex on $\mathbb{R}^n$.
\end{lemma}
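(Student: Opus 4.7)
The plan is to recognize $f(d) = \tfrac12 \|Md - b\|_2^2$ as the composition of a convex quadratic with an affine map, and then invoke a general library fact (\texttt{affine\_sq\_convex} in Optlib) that packages exactly this reasoning for the squared Euclidean norm of an affine image. Concretely, I would first unfold the definition of \texttt{Wavelet\_model.f} so that the goal becomes proving $\mathrm{ConvexOn}\,\mathbb{R}\,\mathrm{univ}\,(d \mapsto \tfrac12 \|Md - b\|_2^2)$, and then apply the library lemma directly with parameters $M$ and $b$.

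Mathematically, the argument proceeds in three steps. First, the map $T : d \mapsto Md - b$ is affine. Second, the function $u \mapsto \|u\|_2^2$ is convex on a Euclidean space, since it is the composition of the norm (a convex seminorm) with the convex increasing function $t \mapsto t^2$ on $[0,\infty)$; equivalently, its Hessian is $2I \succeq 0$. Third, the composition of a convex function with an affine map is convex, and multiplying a convex function by the nonnegative scalar $\tfrac12$ preserves convexity. Chaining these gives convexity of $f$ on all of $\mathbb{R}^n$, which matches the $\mathrm{ConvexOn}\,\mathbb{R}\,\mathrm{univ}$ target.

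At the Lean level, I expect the single tactic call \lstinline{exact affine_sq_convex M b} to close the goal after the unfold, mirroring the analogous convexity proof pattern that appears elsewhere in the file (e.g.\ the similar lemma for the Lasso example). The main obstacle, such as it is, lies not in the mathematics but in the formalization bookkeeping: making sure the squared norm in the unfolded goal is definitionally the same $\|\cdot\|_2^2$ expected by \texttt{affine\_sq\_convex}, that the Euclidean space structure on $\mathtt{EuclideanSpace}\,\mathbb{R}\,(\mathrm{Fin}\,n)$ is correctly picked up, and that the coercion between \texttt{Matrix.mulVec} and the continuous linear map in the library lemma is handled. If the types do not align immediately, I would insert a \texttt{show} to rewrite the goal in the exact shape required, or fall back to building the composition explicitly via \texttt{ConvexOn.comp\_affineMap} with the convexity of $\|\cdot\|^2$ and the affine map $d \mapsto Md - b$.
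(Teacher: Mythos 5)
Your proposal matches the paper's proof exactly: the Lean source unfolds \texttt{Wavelet\_model.f} and closes the goal with \texttt{exact affine\_sq\_convex M b}, which is precisely the single library call you predicted, and your affine-composition justification is the standard mathematical content that lemma packages. No gaps; the extra fallback via \texttt{ConvexOn.comp\_affineMap} is unnecessary but harmless.
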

\begin{proof}
By application of the affine square convexity lemma to $f(d) = \frac{1}{2}\|Md - b\|_2^2$.
\end{proof}

\begin{lemma}
The function $g$ is convex on $\mathbb{R}^n$.
\end{lemma}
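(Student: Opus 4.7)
The plan is to rewrite $g$ as a non-negative linear combination of convex functions and then conclude by standard closure properties of convexity. Since $g(d) = \|\lambda \odot d\|_1$ in the formalization, I would first unfold the $\ell_1$ norm on the \texttt{PiLp 1} structure to obtain the explicit representation $g(d) = \sum_{i=1}^n |\lambda_i\, d_i|$. Invoking the hypothesis \texttt{pro.hlam}, which states $\lambda_i \geq 0$ for all $i$, this equals $\sum_{i=1}^n \lambda_i\, |d_i|$, which is manifestly a non-negative combination of convex functions.

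The key steps, carried out in order, would be: first, unfold \texttt{Wavelet\_model.g} and convert the $\ell_1$-norm into a finite sum via the appropriate Mathlib simp lemma for the $p=1$ case of \texttt{PiLp} norms; second, for each fixed index $i$, establish that $d \mapsto |d_i|$ is convex on $\mathbb{R}^n$ by composing the convex absolute-value function (such as \texttt{convexOn\_abs} or \texttt{convexOn\_norm} applied to $\mathbb{R}$) with the linear coordinate projection, using a helper like \texttt{ConvexOn.comp\_linearMap}; third, scale each term by the non-negative constant $\lambda_i$ using \texttt{ConvexOn.smul}, which is where the hypothesis $\lambda_i \geq 0$ enters; and fourth, apply \texttt{convexOn\_sum} (or \texttt{ConvexOn.sum}) over \texttt{Finset.univ} to conclude that the finite sum, and therefore $g$, is convex on the whole space.

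The main obstacle I anticipate is the first step: reconciling the abstract $\ell_1$-norm notation, defined through the local \texttt{PiLp 1} instance on the vector with entries $\lambda_i\, d_i$, with the explicit summation form required for a term-by-term convexity argument. This will likely require locating and applying the correct Mathlib simp lemma that unfolds the \texttt{PiLp} norm at $p=1$, managing coercions between \texttt{EuclideanSpace} and \texttt{PiLp 1}, and disambiguating between the scalar absolute value on $\mathbb{R}$ and the componentwise norm. Once this rewriting is in place, the remaining convexity combinators are routine applications of well-established Mathlib lemmas, with \texttt{pro.hlam} feeding in cleanly at the scaling step to justify the non-negative smul.
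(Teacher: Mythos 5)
Your proposal is mathematically sound, but it is worth noting that the paper itself does not actually prove this lemma: in the Lean source the corresponding declaration \texttt{Wavelet\_model.ConvexOn\_g} is discharged with \texttt{sorry}, and the back-translated proof in the paper reads only ``Proof incomplete: Convexity of weighted $\ell_1$-norm.'' So there is no paper proof to compare against --- you have supplied an argument where the paper left a gap. Your decomposition is the standard and correct one: unfold the \texttt{PiLp 1} norm to get $g(d) = \sum_i |\lambda_i d_i| = \sum_i \lambda_i |d_i|$ (using \texttt{hlam} to drop the absolute value on $\lambda_i$), observe that each $d \mapsto |d_i|$ is convex as the composition of the absolute value with the linear coordinate projection, scale by the non-negative $\lambda_i$, and sum. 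Your anticipated obstacle is also the right one to worry about in practice: the friction in Lean is almost entirely in the first step, namely unfolding the $p=1$ \texttt{PiLp} norm into an explicit \texttt{Finset.sum} and managing the coercions between \texttt{EuclideanSpace} (which carries the $p=2$ instance) and the locally-declared $\ell_1$ norm on the vector $i \mapsto \lambda_i d_i$; the remaining combinators (\texttt{ConvexOn.smul}, summation of convex functions, composition with a linear map) are routine. In short, your plan fills in exactly the piece the paper marks as incomplete, and there is no discrepancy with the paper because the paper offers no proof.
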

\begin{proof}
Proof incomplete: Convexity of weighted $\ell_1$-norm.
\end{proof}

\begin{lemma}
The function $f$ is differentiable everywhere.
\end{lemma}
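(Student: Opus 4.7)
The plan is to derive differentiability as an immediate corollary of the gradient-existence lemma established just above. That lemma shows $\nabla f(d) = M^\top(Md - b)$ exists at every $d \in \mathbb{R}^n$; since having a gradient at a point is, by definition, stronger than being differentiable at that point, differentiability on the whole domain follows at once.

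Concretely, I would fix an arbitrary $d \in \mathbb{R}^n$, appeal to the preceding \texttt{hasGradient} lemma to obtain $\mathrm{HasGradientAt}\, f\, (M^\top(Md - b))\, d$, and then invoke the standard promotion from $\mathrm{HasGradientAt}$ to $\mathrm{DifferentiableAt}$ (a one-step implication in the underlying Optlib/mathlib library, corresponding to \texttt{HasGradientAt.differentiableAt}). Universally generalising over $d$ then turns this pointwise statement into the global assertion $\mathrm{Differentiable}\,\mathbb{R}\, f$.

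The main obstacle is essentially nonexistent: the real analytic content — namely, that the gradient of $d \mapsto \tfrac{1}{2}\|Md - b\|_2^2$ equals $M^\top(Md - b)$ — has already been discharged by the affine-square gradient lemma invoked earlier. What remains here is purely a structural bridging step, with no new estimates, chain-rule computations, or limit arguments needed. In the Lean source this proof is therefore a single term built by composing the two lemmas, matching the one-line proof \lean{fun x $\mapsto$ HasGradientAt.differentiableAt (pro.hasGradient x)} appearing in the formalisation.
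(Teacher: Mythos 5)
Your proposal is correct and matches the paper's proof exactly: the paper likewise discharges this lemma as a one-line consequence of the preceding gradient-existence lemma, via the term \texttt{fun x $\mapsto$ HasGradientAt.differentiableAt (pro.hasGradient x)}, with the stated justification ``direct application of gradient existence.'' No differences to report.
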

\begin{proof}
Direct application of gradient existence.
\end{proof}

\begin{lemma}
The gradient $\nabla f$ is $L$-Lipschitz continuous.
\end{lemma}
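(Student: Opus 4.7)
The plan is to reduce the $L$-Lipschitz inequality $\|\nabla f(x) - \nabla f(y)\|_2 \leq L\|x - y\|_2$ to a single operator-norm bound on $M^\top M$, using the gradient formula established in the preceding lemma. The key observation is that $f$ is quadratic, so $\nabla f$ is affine and its increments depend only on $M^\top M$, not on $b$.

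First I would rewrite the difference of gradients using $\nabla f(d) = M^\top(Md - b)$, obtaining
\[
\nabla f(x) - \nabla f(y) = M^\top(Mx - b) - M^\top(My - b).
\]
The constant $-b$ terms cancel once $M^\top$ is distributed across the matrix–vector subtraction, and associativity of matrix multiplication collapses the expression to $(M^\top M)(x - y)$. At this point the goal reduces to the single inequality $\|(M^\top M)(x - y)\|_2 \leq L\|x - y\|_2$.

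Second, since $L$ is defined as the $\ell_2$ operator norm $\|M^\top M\|_{op}$, the remaining inequality is precisely the defining property of the operator norm applied to the vector $x - y$. I would discharge it by invoking the standard library fact that the $\ell_2$-norm of $A v$ is bounded by $\|A\|_{op}\|v\|_2$ for any matrix $A$ and vector $v$, i.e.\ the mathlib lemma \texttt{Matrix.l2\_opNorm\_mulVec} specialized to $A = M^\top M$.

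The main obstacle is purely algebraic bookkeeping rather than analysis: one must perform the rewrites for matrix–vector subtraction, distribute $M^\top$, cancel $b$, and re-associate into the product $(M^\top M)(x - y)$ in the correct order, which in tactic mode corresponds to a short chain of rewriting steps followed by a single application of the operator-norm lemma. Once this normal form is reached, no further estimates are required, and the constant $L = \|M^\top M\|_{op}$ emerges as the optimal Lipschitz constant automatically.
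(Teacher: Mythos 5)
Your proposal matches the paper's proof essentially step for step: both rewrite via the gradient formula $\nabla f(d) = M^\top(Md-b)$, cancel the $b$ terms to collapse the difference to $(M^\top M)(x-y)$, and close with the operator-norm bound, invoking the same library lemma \texttt{Matrix.l2\_opNorm\_mulVec} applied to $M^\top M$. The approach is correct and identical to the paper's.
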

\begin{proof}
Rewriting using gradient expressions and matrix norm properties:
\[
\|\nabla f(x) - \nabla f(y)\| = \|M^\top M(x - y)\| \leq \|M^\top M\|_{op} \|x - y\|
\]
by definition of the operator norm.
\end{proof}

\begin{theorem}
Let $x^*$ minimize $\varphi = f + g$. The Nesterov algorithm satisfies:
\[
\varphi(x_{k+1}) - \varphi(x^*) \leq \frac{2L \|x_0 - x^*\|^2}{(k+2)^2}
\]
\end{theorem}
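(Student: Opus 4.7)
The plan is to reduce the wavelet-specific bound to the abstract convergence theorem \texttt{Nesterov\_converge} from Optlib, exploiting the structure-to-instance paradigm developed earlier in the paper. Since \texttt{Wavelet\_model} supplies a \texttt{composite\_problem} instance (with $f$ the squared data-fidelity term and $g$ the weighted $\ell_1$ penalty), and since \texttt{Nesterov\_wavelet} yields a \texttt{Nesterov\_first\_fix\_stepsize} instance on this composite problem via \texttt{alg.Nesterov\_first\_fix\_stepsize}, the only remaining task is to discharge the analytic hypotheses of the abstract theorem: positivity of $L$, differentiability of $f$, convexity of $f$ and $g$, $L$-Lipschitzness of $\nabla f$, and the minimality hypothesis on $x^*$.

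First I would invoke the already-established auxiliary lemmas in the same file: \texttt{alg.hl} for $L > 0$; \texttt{pro.diff\_f} and \texttt{pro.ConvexOn\_f} for differentiability and convexity of the quadratic term (both reducing to \texttt{affine\_sq\_gradient} and \texttt{affine\_sq\_convex}); \texttt{pro.lip\_f} for the Lipschitz estimate $\|\nabla f(x)-\nabla f(y)\| = \|M^{\top}M(x-y)\| \leq \|M^{\top}M\|_{op}\|x-y\|$; and the user-supplied \texttt{minφ} for the minimizer condition. With these in hand, the abstract theorem produces the advertised rate $2L\|x_0-x^*\|^2/(k+2)^2$ in a single \texttt{apply} step, because the constants and exponent arise directly from the abstract statement.

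The main obstacle is not the invocation itself but two prerequisite lemmas still marked \texttt{sorry} in the file. The first is \texttt{ConvexOn\_g}, convexity of $d \mapsto \|\lambda \odot d\|_1$: this follows coordinatewise, since each $d_i \mapsto \lambda_i |d_i|$ is a nonnegative multiple (using \texttt{pro.hlam}) of the absolute value, and a finite sum of convex functions is convex, but care is needed with the \texttt{PiLp 1} norm unfolding in mathlib. The second, and genuinely technical, is \texttt{update\_cor}: we must show that the explicit soft-thresholding rule $x_{k+1}(i) = \mathrm{sign}(w_k(i))\cdot\max(|w_k(i)|-t_k\lambda_i,0)$ satisfies the abstract proximal characterization \texttt{prox\_prop (t\_k • pro.g) (y\_k - t\_k • gradient pro.f (y\_k)) (x (k+1))}. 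This amounts to the classical identification of componentwise soft-thresholding with the proximal operator of the weighted $\ell_1$ norm; it separates across coordinates and in each coordinate reduces to the one-dimensional identity $\mathrm{prox}_{\alpha\lambda_i|\cdot|}(w) = \mathrm{sign}(w)\cdot\max(|w|-\alpha\lambda_i,0)$, which can be verified by a case split on the sign of $w$ together with the subdifferential optimality condition for $|\cdot|$.

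Once these two lemmas are supplied, no bookkeeping remains: the instance mechanism transports the concrete iterates, stepsizes, and momentum parameters into the abstract framework, and \texttt{Nesterov\_converge} delivers the quadratic rate verbatim. This illustrates precisely the SITA workflow — the instance declarations package the algorithmic data, the handful of analytic lemmas certify the structural conditions $\mathcal{C}$, and the theorem follows by a single application of the abstract result in $\mathcal{T}$.
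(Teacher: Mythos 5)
Your proposal matches the paper's proof exactly: the theorem is discharged by a single \texttt{apply} of the abstract \texttt{Nesterov\_converge} theorem through the \texttt{Nesterov\_first\_fix\_stepsize} instance, with the hypotheses supplied by \texttt{alg.hl}, \texttt{pro.diff\_f}, \texttt{pro.ConvexOn\_f}, \texttt{pro.lip\_f}, \texttt{pro.ConvexOn\_g}, and \texttt{min$\varphi$}. Your additional observations about the two lemmas still marked \texttt{sorry} (\texttt{ConvexOn\_g} and \texttt{update\_cor}) are accurate --- the paper likewise leaves them incomplete --- and your sketches for closing them are plausible but go beyond what the paper itself provides.
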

\begin{proof}
Direct application of the generic Nesterov convergence theorem to the composite problem, using the established properties: differentiability of $f$, convexity of $f$ and $g$, $L$-Lipschitz gradient for $f$, and valid proximal updates.
\end{proof}

\paragraph{Evaluation}
The back-translated output serves not only as a test of linguistic fidelity but more importantly as an evaluation of the system’s ability to generate high-quality, structurally complete mathematical content from a compact problem description. Given the minimal input, consisting of a problem name, a mathematical objective, and an algorithm class, the resulting formalization demonstrates a coherent and rigorous unfolding of the wavelet decomposition problem and its solution via Nesterov’s first-order method. While the current formalization proof mainly relies on relatively advanced theorems in Optlib or Mathlib, which results in a concise natural-language explanation. We can further expand the underlying proof details, with the help of LLM, into a more detailed and comprehensive article.

Specifically, the generated content includes well-formed definitions of the objective components (the smooth term $f$, the non-smooth term $g$, and their sum $\varphi$), appropriate algorithmic details such as initialization and iteration schemes, and a convergence theorem with supporting lemmas. The definitions are mathematically accurate, and the key assumptions, such as convexity, differentiability, and Lipschitz continuity, are properly identified and supported by lemmas. Proofs are partially completed where information suffices, while incomplete elements are clearly marked, maintaining formal integrity.

Overall, the output exhibits strong structural coherence and conceptual correctness. Despite the lack of detailed input, the system successfully reconstructs a nontrivial optimization setting with appropriate theoretical guarantees. This highlights the potential of structure-to-instance autoformalization not only for formal encoding but also for the autonomous generation of interpretable and pedagogically useful mathematical documentation. This approach demonstrates the feasibility of using formal systems as a foundation for generating natural language explanations that are both accurate and logically grounded.

\section{Algorithm Details}
\subsection{Pseudocode of the Algorithm}
Algorithm~\ref{alg:all} summarizes the main pipeline of SITA. Starting from a structured formalization template and an instance described in natural language, the system proceeds through a series of modular stages: generating instance-specific definitions and statements, type-checking and correcting errors, and finally proving all pending goals. Each phase leverages LLMs for generation, while interacting tightly with Lean’s type checker to ensure correctness. The algorithm iteratively repairs and validates the proof state until a complete and verifiable Lean file is produced. The \textbf{automatic corrector} is defined as in Algorithm \ref{alg:fix}.
\begin{algorithm}[htbp]
\caption{Structure‐to‐Instance Autoformalization}
\label{alg:all}
\begin{algorithmic}[1]
\Require $\mathcal{S}=\langle\mathcal{D},\mathcal{O},\mathcal{C},\mathcal{T}\rangle$ (formalized template), instance information $I$ (natural language problem)
\Ensure Lean file $F$ defining and proving the instance $\mathbb{I}$ with $\langle\mathbb{D},\mathbb{O},\mathbb{C},\mathbb{T}\rangle$.

\State Base on the template, generate $\mathbb{D}, \mathbb{O}, \mathbb{T}$(with sorry) from $ \textsc{LLM}(\mathcal{D},\mathcal{O},\mathcal{T}|I)$  
  \quad \Comment{Definition and statement generation }
\State \textbf{Lean.Check}($\mathbb{D}, \mathbb{O}, \mathbb{T}$)  
  \Comment{Type‐check}

\State \textbf{Automatic Corrector} ($\mathbb{D}, \mathbb{O}, \mathbb{T}$) \Comment{Error fix} 
\State Find all sorry needed to prove as $\Pi$
\ForAll{goal $g\in \Pi$}
  \Repeat
    \State $p \gets \textsc{LLM}(\text{proof}|g)$ \Comment{Whole proof generation}
    \State $e \gets \textbf{Lean.Check}(p)$
    \If{$e\neq \texttt{OK}$}
      \State \textbf{Automatic Corrector} (p) \Comment{Error fix}
    \EndIf
  \Until{$e=\texttt{OK}$ or maximum step reached}
  
\EndFor
\State Fix all the rest errors and get the entire Lean file $F$.
\State \textbf{Lean.Check}($F$)  
  \Comment{Final verification}
\State \Return $F$
\end{algorithmic}
\end{algorithm}

\begin{algorithm}
\caption{Error Fix Framework}
\label{alg:fix}
\begin{algorithmic}[1]
\State \textbf{Input:} Initial Lean code with errors $C$, Error knowledge base $\mathcal{K}$.
\State \textbf{Output:} Corrected Lean code $C'$.

\State \textbf{Initialize:} 
\State $C' \gets C$, $e \gets \text{True}$
\While{$e$ and maximum step not reached}
    \State Static rule-based fixes: $C' \gets \textbf{StaticFix}(C')$
    \State $e \gets \textbf{Lean.Check}(C')$
    \If{$error = \text{False}$}
        \State \textbf{Break} \Comment{Static method fix the code.}
    \EndIf
    
    \State Retrieve error from knowledge base 
    
    $\mathcal{K}(e) \gets \text{RetrieveFeedback}(\mathcal{K}, e)$
    \State $prompt \gets \text{ConstructPrompt}(e, \mathcal{K}(e))$
    
    \State Generate correction $C_1 \gets \text{LLMFix} (C', prompt)$
    \State \textbf{Recheck:}
    \State $e \gets \text{CheckForErrors}(C_1)$
    
    \If{$e = \text{False}$}
        \State Update the knowledge base: 
        
        $\quad \mathcal{K} \gets \text{GetKnowledge}(\mathcal{K}, e, C', C_1)$
        \State $C' \gets C_1$
        \State \textbf{Break} \Comment{Error resolved, stop iteration}
    \EndIf 
    \State $C' \gets C_1$
\EndWhile

\State \textbf{Return:} Corrected code $C'$
\end{algorithmic}
\end{algorithm}
\subsection{Prompts Used in the Algorithms}
\paragraph{Prompts for first generation of the backbone}

As a mathematical formalization expert and Lean 4 programming expert, you possess extensive experience and deep understanding in formalizing optimization problems and are proficient in Lean 4 programming. You are capable of defining new classes, definitions, and instances in Lean 4 and can derive theorems for specific problems based on the general structure.

You need to generate a complete Lean 4 formalization for a specific optimization problem instance. We already has a general formalization structure for the optimization method and the class of problems it applies to, and requires the creation of a formalization for a specific problem instance.

Your task is to generate a complete Lean 4 formalization of this specific optimization problem instance, strictly following the structure and the style of the provided structure reference Lean 4 file. You need to define new classes for the optimization problems and methods, define suitable definitions based on the classes. Besides, you need to link the formalization of the specific problem to the structure reference using instance in Lean4. Finally, you need to state theorems specialized from the structure reference under the setting of the concrete problems, and try to prove it based on the structure reference. If you cannot prove it, just write "sorry" in the proof.

The definition of the problem class should contain all the needed variables in ([name] : [Type]) format, no matter whether they are used in the properties here or not. Do not use ``variable" block with explicit definitions. If you use ``let" to define things, please give the corresponding type explicitly. You may need to use ``let" to define some intermediate variables. Please note that the output of matrix vector multiplication is defined using type of ``Fin n $\to$ $\mathbb{R}$". You may need to use ``let" to give the type as EuclideanSpace $\mathbb{R}$ (Fin n).

Requirements:

1. Strictly follow the structure of the reference file:

   - Problem definition (variables, objective function)
   
   - Algorithm implementation (parameters, iteration format)
   
   - Convergence theorem (statement only, proofs as ``sorry")
   
2. **Replace All proofs with ``sorry"**

3. Preserve all mathematical notation, naming conventions, and code style from the reference.

4. Ensure the generated code is syntactically correct Lean 4 code. Do not use functions not defined in Lean4.

5. An example of a Lean 4 formalization of abstract method applied to the concrete problem is provided. Your output must imitate its structure.

6. You should not add unneeded assumptions for the theorems.

7. Use same imports and namespaces as reference. Do not change the imports. You should not need to repeat the template.

Problem description: \{problem\}

Structure Reference Lean4 code: \{lean\_structure\}

Example Lean4 code: \{lean\_example\}

Output ONLY the complete Lean4 code WITHOUT any explanations.

\paragraph{Prompts for proof generation} The prompt used for proof generation is provided as below.

As a mathematical formalization expert and Lean 4 programming expert, you possess extensive experience and deep understanding in formalizing optimization problems and are proficient in Lean 4 programming. You are proving lemmas and theorems in Lean4.

    **CRITICAL TASK**: You MUST replace ALL ``sorry" placeholders with actual mathematical proofs. In Lean 4, ``sorry" is a placeholder that should be replaced with real proofs. Your job is to provide complete, rigorous proofs for each theorem and lemma.

    **IMPORTANT**: Do NOT output ``sorry" in your response. Every ``sorry" you see must be replaced with a proper proof. If you cannot complete a proof, use tactics like ``simp", ``rfl" ``assumption", ``exact", or provide step-by-step proof tactics.

    Requirements:
    
    1. **MANDATORY**: Replace every single ``sorry" with actual proof tactics or proof terms.
    
    2. Keep all other parts of the file unchanged (imports, definitions, theorem statements).
    
    3. Use the reference file to understand proof patterns and tactics.
    
    4. Wrap your complete code in ```lean4 ``` block.

    The file you need to prove (REPLACE ALL ``sorry" WITH REAL PROOFS) \{lean\_content\}

    Structure Reference Lean4 code (for proof patterns and tactics):
    \{example\_content\}

    Remember: Your output must have ZERO ``sorry" statements. Every theorem must have a complete proof!
    
    Output ONLY the complete Lean4 code WITHOUT any explanations.

\paragraph{Prompts for error correction} The prompt used for error correction is given below. The part ``theorem details" is added when the error involves applying theorems.

[Task] As an expert proficient in Lean, your task is to fix the Lean code with the error information Lean offers. Given the following code and list of compiler errors, return a fully fixed version. For definitions, please carefully fix the errors in the code, and for theorems, you can add ``sorry" to fix the code if you cannot prove it.
        
[Full Current Code] \{self.current\_code\}

[Error \{Number\}]

File: \{self.lean\_file\}

Line: \{error['line']\}

Error: \{error['message']\}

[Context] \{local context\}

[Full Block Context] \{error['full\_context']\}

[Top 3 Similar Error Solutions]

[Theorem Details]

[Fix Requirements]

1. Fix all above errors, output complete Lean4 code

2. Return the entire file content, not just fixes.

3. Wrap complete code in ```lean '''

4. Don't fix errors individually, provide a unified solution file.

5. Output ONLY the complete Lean4 code WITHOUT any explanations.

\paragraph{Prompts for fix explanation} The prompt used for fix explanation is given as follows.

As a Lean 4 expert, analyze these code changes and explain the fix professionally:

Original Code (with error): \{ original code\}

Fixed Code: \{ fix code\}

Please provide a concise but detailed explanation that:

Identifies the root cause of the error

Explains what specifically was changed

Describes why the fix works

Uses appropriate Lean terminology

Is under 200 words

Format your response as:

Error Type: \textless type\textgreater

Root Cause: \textless cause\textgreater

Fix Description: \textless description\textgreater

Why It Works: \textless explanation\textgreater

\paragraph{Prompts for majority voting} The prompt for majority voting is given as below.

You are an expert in optimization and formal mathematics using the Lean theorem prover. You are given:

- An optimization problem and a corresponding algorithm designed to solve it.

- A candidate Lean formalization of the problem and algorithm.

Please carefully evaluate the candidate formalization and assign a **single numeric score** on a scale from 0 to 100, based on the following criteria:

\#\#\# Scoring Criteria:

1. Files with more errors will receive lower scores. The error message from Lean compilation is given to you below in error messages part. 

2. Score decomposition:

  1) Problem Formalization: Does the Lean code fully and accurately formalize the given optimization problem? (20')
  
  2) Algorithm Correctness: Is the algorithm correctly and rigorously formalized in Lean? (20')
  
  3) Update Scheme Explicitness: Does the formalization explicitly capture the update scheme (iteration rule) used by the algorithm? (20')
  
  4) Theoretical Analysis: Does the formalization include proofs or reasoning about properties of the problem (e.g. convexity, Lipschitz continuity) and the algorithm (e.g. convergence, complexity)? (20')
  
  5) Proof: Is the formalization of the proof complete and nonsorry? (20')
  
3. The score should reflect both syntatic correctness and semantic correctness of the formalization.

4. The use of sorry to omit essential proofs or definitions will negatively impact the score. Additionally, any lack of clarity or ambiguity in the formalization should result in a score reduction.

5. If you want to score 100, please make sure that the formalization is complete, clear, and rigorous, with no missing definitions or proofs. Scoring 100 needs to be justified by the completeness and correctness of the formalization.

Please provide **only the numeric score** in your response. Be **objective, strict, and rigorous** in your evaluation.

Problem and algorithm: \{problem\}

Candidate answer: \{candidate\}

Error messages: \{error\_messages\}
\subsection{Hyperparameters and Project Versions}
All of the numerical experiments are conducted on.
The hyperparameters of main generation is given in Table \ref{tab:hyperparams}. All relevant open-source projects are provided in Table \ref{tab:version}.

\begin{table}[htbp]
\centering
\begin{tabular}{l|l}
\toprule
\textbf{hyperparameter} & \textbf{value / setting} \\
\midrule
model name & deepseek-reasoner \\
max generation steps for backbone & 3 \\
max generation steps for proof & 3 \\
temperature & 0.7 \\
max tokens & 16000 \\
top p & 0.9 \\
frequency penalty & 0.2 \\
retry on failure & yes \\
max correction steps & 3 \\
error knowledge retrieval & 3 entries \\
max attempts per final fix & 2 \\
\bottomrule
\end{tabular}
\caption{Hyperparameters used in SITA}
\label{tab:hyperparams}
\end{table}
\section{Error Feedback Details}
\subsection{Errors Stored in the Knowledge Base}
The error knowledge stores 75 distinct errors with different kinds. In this subsection, we present model-generated fix suggestions for several frequently occurring errors in Lean, demonstrating the effectiveness of the knowledge-based error feedback mechanism in guiding formalization.
\begin{table}[htbp]
\centering
\begin{tabular}{ll}
\toprule
\textbf{Error Type} & \textbf{Count} \\
\midrule
syntax error & 21 \\
type mismatch & 19 \\
failed to synthesize & 7 \\
invalid field & 7 \\
unknown identifier & 7 \\
unexpected token & 4 \\
unknown constant & 2 \\
unclassified & 2 \\
missing definition & 2 \\
timeout & 1 \\
no goals to be solved & 1 \\
tactic 'apply' failed & 1 \\
incomplete proof & 1 \\
\bottomrule
\end{tabular}
\caption{Normalized Error Types and Their Frequencies (Descending Order)}
\label{table: error_kind}
\end{table}

The categorized error types identified in the knowledge base is presented in Table \ref{table: error_kind}. These error types encompass the most frequently encountered issues in the generation of optimization problems and their associated algorithms.

\subsection{Case Study}
\paragraph{Case 1: Failed to synthesize error}
\begin{itemize}
    \item Error message: 
    
    failed to synthesize HSub (Fin m →  $\mathbb{R}$) (EuclideanSpace $\mathbb{R}$ (Fin m)) ?m.7571
    
    Additional diagnostic information may be available using the `set\_option diagnostics true` command.
    \item Original Code:
\begin{lstlisting}
def Balanced_wavelet_problem.f (self : Balanced_wavelet_problem W A b lam κ) (α : EuclideanSpace ℝ (Fin n)) : ℝ :=
  let I_n : Matrix (Fin n) (Fin n) ℝ := 1
  let P : Matrix (Fin n) (Fin n) ℝ := I_n - W * Wᵀ
  let term1 : ℝ := (κ / 2) * ‖P *ᵥ α‖₂ ^ 2
  let term2 : ℝ := (1 / 2) * ‖A *ᵥ (Wᵀ *ᵥ α) - b ‖₂ ^ 2
term1 + term2
\end{lstlisting}
\item Fixed code:
\begin{lstlisting}
def Balanced_wavelet_problem.f (self : Balanced_wavelet_problem W A b lam κ) (α : EuclideanSpace ℝ (Fin n)) : ℝ :=
  let I_n : Matrix (Fin n) (Fin n) ℝ := 1
  let P : Matrix (Fin n) (Fin n) ℝ := I_n - W * Wᵀ
  let term1 : ℝ := (κ / 2) * ‖P *ᵥ α‖₂ ^ 2
  let term2 : ℝ := (1 / 2) * ‖A *ᵥ (Wᵀ *ᵥ α) - (b : EuclideanSpace ℝ (Fin m))‖₂ ^ 2
term1 + term2
\end{lstlisting}
\begin{table*}[htbp]
\centering

\begin{tabular}{l|l|l}
\toprule
\textbf{Name} & \textbf{Github Link} & \textbf{Version}\\
\midrule
Lean4 & \url{https://github.com/leanprover/lean4}& 4.13.0\\
Mathlib 4 & \url{https://github.com/leanprover-community/mathlib4} & \texttt{d7317655e2826dc1f1de9a0c138db2775c4bb841}\\
Optlib & \url{https://github.com/optsuite/optlib}& \texttt{26fcefb31b5aee7f7f5fc7194f462b437282675f}\\

\bottomrule
\end{tabular}
\caption{Versions of open-source projects used in SITA}
\label{tab:version}
\end{table*}

\item Fix suggestion: 

\textbf{Error Type:} Failed to synthesize

\textbf{Root Cause:} The expression 
\(
\mathbf{A} \cdot_v (\mathbf{W}^\top \cdot_v \boldsymbol{\alpha})
\)
produces a vector in \(\mathrm{EuclideanSpace}\,\mathbb{R}^{m}\), but \(\mathbf{b}\) lacks explicit typing. Lean's type inference couldn't automatically determine that \(\mathbf{b}\) must inhabit \(\mathrm{EuclideanSpace}\,\mathbb{R}^{m}\) for the subtraction
\(
\mathbf{A} \cdot_v (\mathbf{W}^\top \cdot_v \boldsymbol{\alpha}) - \mathbf{b}
\)
to be valid.

\textbf{Fix Description:} Added a type ascription
\(
\mathbf{b} : \mathrm{EuclideanSpace}\,\mathbb{R}^{m}
\)
to explicitly specify the type of \(\mathbf{b}\).

\textbf{Why It Works:} The type ascription constrains \(\mathbf{b}\) to \(\mathrm{EuclideanSpace}\,\mathbb{R}^{m}\), matching the output type of
\(
\mathbf{A} \cdot_v (\mathbf{W}^\top \cdot_v \boldsymbol{\alpha}).
\)
This ensures:
\begin{enumerate}
  \item the subtraction operation is well-typed;
  \item the norm \(\lVert \cdot \rVert_2\) is applied to a homogeneous vector space;
  \item type inference succeeds with unambiguous vector space dimensions.
\end{enumerate}
\end{itemize}
\paragraph{Remark} This fix suggestion demonstrates that the model-generated feedback correctly identified a type mismatch due to Lean’s failure to infer the type of b. The suggested ascription (b : EuclideanSpace $\mathbb{R}$ (Fin m)) resolves the issue by aligning b with the expected type of the subtraction. This indicates the model can localize, correct and conclude type errors by leveraging context.

\paragraph{Case 2: Syntax Error}
\begin{itemize}
    \item Error message: 
    
    \texttt{unexpected token 'def'; expected ')', ',' or ':'}.
    
    \item Original Code:
\begin{lstlisting}
def SparseLogisticRegression_problem.f (self : SparseLogisticRegression_problem A b lambda) : EuclideanSpace ℝ (Fin n) → ℝ :=
  fun x ↦ ∑ i, Real.log (1 + Real.exp (-b i * (A i ⬝ᵥ x))
\end{lstlisting}

    \item Fixed code:
\begin{lstlisting}
def SparseLogisticRegression_problem.f (self : SparseLogisticRegression_problem A b lambda) : EuclideanSpace ℝ (Fin n) → ℝ :=
  fun x ↦ ∑ i, Real.log (1 + Real.exp (-b i * (A i ⬝ᵥ x)))
\end{lstlisting}

    \item Fix suggestion: 

\textbf{Error Type:} Syntax error (parenthesis mismatch)

\textbf{Root Cause:} The original expression contains unbalanced parentheses: both \texttt{Real.log} and \texttt{Real.exp} are function applications that open a parenthesis, but only \texttt{Real.exp} is closed. This leaves the \texttt{Real.log} application incomplete, leading to a parser error.

\textbf{Fix Description:} Added a closing parenthesis \texttt{)} at the end of the line to complete the \texttt{Real.log} application.

\textbf{Why It Works:} With the added closing parenthesis, the expression becomes syntactically valid:

\texttt{Real.log(}  \texttt{1 + Real.exp(}\texttt{-b i * (A i $\cdot_v$ x)} \texttt{)} \texttt{)}.
This satisfies Lean’s requirement for balanced delimiters in nested function applications and resolves the parsing failure.
\end{itemize}

\paragraph{Remark} LLMs frequently produce expressions with mismatched parentheses. However, the error messages provided by the Lean compiler are often indirect, it typically reports an “unexpected token” rather than explicitly indicating a missing or mismatched parenthesis. The model  explainer enhances this feedback by accurately identifying the root cause and explicitly pointing out the parenthesis mismatch.

\paragraph{Case 3: Type Mismatch}
\begin{itemize}
    \item Error message:
    
    \texttt{don't know how to synthesize implicit argument '$\delta$' \\
    @huber\_loss (?m.2688 f self x i) (A i $\cdot _v$ x - b i) \\
    context: \\
    m n : $\mathbb{N}$ \\
    A : Matrix (Fin m) (Fin n) $\mathbb{R}$ \\
    b : Fin m → $\mathbb{R}$ \\
    $\delta$ lam : $\mathbb{R}$ \\
    self : RobustRegression\_Huber\_L1\_problem A b $\delta$ lam \\
    x : EuclideanSpace $\mathbb{R}$ (Fin n) \\
    i : Fin m \\
    $\vdash$ $\mathbb{R}$}
    
    \item Original Code:
\begin{lstlisting}
def RobustRegression_Huber_L1_problem.f (self : RobustRegression_Huber_L1_problem A b δ lam) : EuclideanSpace ℝ (Fin n) → ℝ :=
  fun x ↦ ∑ i, huber_loss (A i ⬝ᵥ x - b i)
\end{lstlisting}

    \item Fixed code:
\begin{lstlisting}
def RobustRegression_Huber_L1_problem.f (self : RobustRegression_Huber_L1_problem A b δ lam) : EuclideanSpace ℝ (Fin n) → ℝ :=
  fun x ↦ ∑ i, huber_loss δ (A i ⬝ᵥ x - b i)
\end{lstlisting}

    \item Fix suggestion: 

\textbf{Error Type:} Type mismatch

\textbf{Root Cause:} The \texttt{huber\_loss} function expects two arguments: the threshold \texttt{$\delta$ : $\mathbb{R}$} and a residual \texttt{r : $\mathbb{R}$}. In the original code, only the residual \texttt{(A i $\cdot_v$ x - b i)} is provided, leaving \texttt{$\delta$} implicit. However, Lean cannot infer \texttt{$\delta$} from the context and throws an error.

\textbf{Fix Description:} Explicitly added the missing argument \texttt{$\delta$} before the residual, resulting in \texttt{huber\_loss $\delta$ (A i $\cdot_v$ x - b i)}.

\textbf{Why It Works:} The type signature of \texttt{huber\_loss} is \texttt{$\mathbb{R}$ → $\mathbb{R}$ → $\mathbb{R}$}. By supplying both the threshold and residual explicitly, the function application becomes well-typed, and the compiler can proceed with type-checking the full expression.
\end{itemize}

\paragraph{Remark} This kind of error is common when using functions with implicit arguments in Lean. If the compiler lacks sufficient context to infer such arguments, it reports a cryptic “don't know how to synthesize” error. Providing the expected parameters explicitly can often resolve such issues cleanly.

\section{Optimization Problem Dataset}\label{appendix: dataset}
We provide additional statistical details of our optimization problem dataset. Since each optimization problem can be solved using multiple algorithms, we treat a pair consisting of a problem and an algorithm as a single data point. The dataset includes 42 distinct optimization problems, solved by the following algorithmic families: 9 using proximal gradient methods (PGM), 9 using standard gradient descent (GD), 9 using block coordinate descent (BCD), 8 using Nesterov's acceleration, and 7 using the alternating direction method of multipliers (ADMM). The update schemes and their corresponding algorithm are summarized as follows.
\begin{itemize}
    \item GD: solves unconstrained optimization problem as $\min\limits_xf(x)$. Its update scheme is as 
    \[x_{k+1} = x_k - t \cdot \nabla f(x_k).\]
    \item PGD:  solves composite optimization problem as  
    $\min\limits_xf(x) + g(x)$. Its update scheme is as 
    \[x_{k+1} = \operatorname{prox}_{tg}(x_k - t\nabla f(x_k)).\]
    \item Nesterov: solves composite optimization problem. Its update scheme is as
    \begin{align*}
    \begin{cases}   
      y_k = x_k + \frac{\gamma_k (1 - \gamma_{k - 1})}{\gamma_{k - 1}} (x_k - x_{k - 1}), \\
      x_{k+1} = \operatorname{prox}_{t h}(y_k - t \nabla f(y_k)),
    \end{cases}
  \end{align*}
  where $\gamma_k = \frac{2}{k+2}$.
  \item BCD: solves optimization problems with block structure:
  \begin{align*}
    \underset{x, y}{\min} \; \Psi(x,y) = f(x) + g(y) + H(x,y).
\end{align*}
The corresponding algorithm update gives as:
\begin{align*}
  x^{k+1} &\in \text{prox}_{c_k f} \left( x^k - c_k \nabla_x H(x^k, y^k) \right), \\
    y^{k+1} &\in \text{prox}_{d_k g} \left( y^k - d_k \nabla_y H(x^{k+1}, y^k) \right).
\end{align*}
\item ADMM: solves optimization problems with linear constraint as below:
\begin{equation}
    \begin{aligned}
        \min_{x_1, x_2} & \quad f_1(x_1) + f_2(x_2), \\
        \text{s.t.} & \quad A_1 x_1 + A_2 x_2 = b.
    \end{aligned}
    \label{eq: ADMM}
\end{equation}
The update scheme is given as:
\begin{align*}
        x_{1}^{k+1} &= \underset{x_1}{\arg \min}\; L_{\rho}(x_1,x_2^k,y^k),\\
        x_{2}^{k+1} &= \underset{x_2}{\arg \min}\; L_{\rho}(x_1^{k+1},x_2,y^k),\\
        y^{k+1} &= y^k + \tau \rho (A_1x_1^{k+1} + A_2x_2^{k+1} - b), 
\end{align*}
where $L_{\rho}(x_1, x_2, y)$ denotes the augmented Lagrangian function and is given as:
\begin{align*}
  L_{\rho}(x_1, x_2, y) &= f_1(x_1) + f_2(x_2) +\left<y, A_1x_1 + A_2x_2 - b\right>
   \\ & + \frac {\rho} 2  \left\lVert A_1x_1 + A_2x_2 - b\right \rVert^2.  
\end{align*}
\end{itemize}
The complexity of these algorithms also differs, with ADMM being the most complicated owing to its reliance on the augmented Lagrangian formulation, while PGM and GD represent more elementary optimization schemes. The distribution is given in Figure \ref{fig:problem dis}.
\begin{figure}[htbp]
    \centering
    \includegraphics[width=0.7\linewidth]{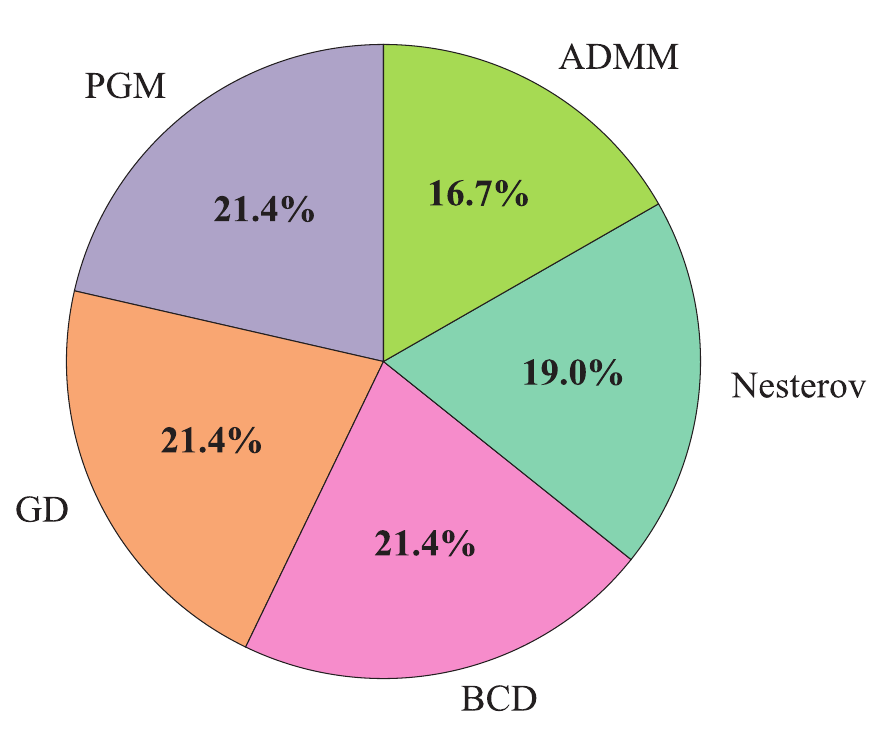}
    \caption{Distribution of algorithm types in the dataset}
    \label{fig:problem dis}
\end{figure}

While each algorithmic family may contain several variants, we select a representative implementation from each to ensure consistency and simplicity. Variants within the same family tend to behave similarly in structure and behavior, and thus are not distinguished separately in this dataset. 

The problems in our dataset primarily focus on optimization over vector-valued variables. Some of these problems are relatively challenging to specify, and computing their gradients requires careful derivation. As a result, formalizing their definitions and structures is non-trivial. Within each algorithmic class, the difficulty of the problems is relatively uniform. For clarity and better intuition, we present additional representative problems below.
\begin{itemize}
    \item Group Lasso 
    (regularized least-squares problem with group sparsity)
  \begin{align}\label{eq:group lasso}
       \min_{x \in \mathbb{R}^n} \frac{1}{2}\|Ax - b\|_2^2 + \lambda \sum_{g=1}^G \|x_{G_g}\|_2,
  \end{align}
  where \( A \in \mathbb{R}^{m \times n} \), \( b \in \mathbb{R}^m \), \( \lambda > 0 \), and \( \{G_1, \dots, G_G\} \) is a partition of the index set \( \{1, \dots, n\} \). Each term \( \|x_{G_g}\|_2 \) promotes group-level sparsity in the solution.
  \item Robust regression:
  This formulation addresses robustness to outliers in the data by employing the Huber loss function and promotes sparsity in the regression coefficients through an $\ell_1$ regularization term.
  \[
    \min_{x \in \mathbb{R}^n} \sum_{i=1}^m \phi_\delta(a_i^\top x - b_i) + \lambda \|x\|_1,
  \]
  where \( A = [a_1^\top; \dots; a_m^\top] \in \mathbb{R}^{m \times n} \), \( b \in \mathbb{R}^m \), \( \lambda > 0 \), and the Huber loss \( \phi_\delta(r) \) is defined as
  \[
    \phi_\delta(r) =
    \begin{cases}
      \frac{1}{2} r^2, & \text{if } |r| \leq \delta, \\
      \delta (|r| - \frac{1}{2} \delta), & \text{otherwise}.
    \end{cases}
  \]
  \item L2 Group Sparse Coding (sparse and structured representation model)
  \[
    \min_{x \in \mathbb{R}^n,\, z \in \mathbb{R}^m} \frac{1}{2} \|Ax + Bz - y\|_2^2 + \lambda_1 \|x\|_1 + \lambda_2 \|z\|_2,
  \]
  where \( y \in \mathbb{R}^d \) is the observed signal, \( A \in \mathbb{R}^{d \times n} \) and \( B \in \mathbb{R}^{d \times m} \) are dictionary matrices, and \( x \in \mathbb{R}^n \), \( z \in \mathbb{R}^m \) are latent representations. The term \( \|x\|_1 \) enforces sparsity in \( x \), while the term \( \|z\|_2 \) encourages structured or group-like behavior in \( z \). This formulation is widely used in sparse coding, multi-component signal decomposition, and representation learning tasks.

Some related conditions needed to prove and formalize are listed as follows.
\begin{itemize}
    \item Explicit calculation of the gradient or subgradient of the target functions.
    \item The properties of the target function, including boundedness, convexity, differentiability, and the Lipschitz continuous of the gradient.
    \item The explicit solution of the subproblem, including the solution of the proximal operator of the target function.
    \item The Kurdyka–Łojasiewicz (KL) property of the target functions.
\end{itemize}

\end{itemize}
\section{Case Study of SITA Output}
\subsection{Case Study of Definition Generation}
We present several classical generation examples to demonstrate the effectiveness of the SITA framework.
\paragraph{Case 1: (Success) Logistic regression with $\ell_1$ penalty}
The logistic regression with $\ell_1$ penalty solves the problem as follows. 
\begin{align*}
    \min_x \psi(x) =  \sum_{i=1}^m \log\bigl(1+e^{-b_i a_i^\top x}\bigr) + \lambda \|x\|_1.
\end{align*}
The formalization of the logistic regression problem involves formalizing the definition of the target function $f(x) =\sum_{i=1}^m  \log\bigl(1+e^{-b_i a_i^\top x}\bigr)$ and calculate the gradient of the target function as \[
\nabla f(x) = \sum_{i=1}^m \frac{-b_i a_i}{1 + e^{-b_i a_i^\top x}}.
\] 
Instead of following human notations using $a_i$ as vectors, the autoformalization mechanism applies transforming the form of the target function considering all $a_i$ as an entire matrix $A = [a_1^\top, \cdots, a_m^\top ]^\top$, simplifying the notations. The model correctly formalizes the definitions of this problem in the following type class. 
\begin{lstlisting}
class SparseLogisticRegression_problem {m n : ℕ} (A : Matrix (Fin m) (Fin n) ℝ) (b : (Fin m) → ℝ) (lambda : ℝ) where
  hA : A ≠ 0
  hlambda : lambda > 0

def SparseLogisticRegression_problem.f (self : SparseLogisticRegression_problem A b lambda) : EuclideanSpace ℝ (Fin n) → ℝ :=
  fun x ↦ ∑ i, Real.log (1 + Real.exp (-b i * (A *ᵥ x) i))

def SparseLogisticRegression_problem.g (self : SparseLogisticRegression_problem A b lambda) : EuclideanSpace ℝ (Fin n) → ℝ :=
  fun x ↦ lambda * ‖ x ‖₁

def SparseLogisticRegression_problem.target (self : SparseLogisticRegression_problem A b lambda) : EuclideanSpace ℝ (Fin n) → ℝ :=
  fun x ↦ self.f x + self.g x
\end{lstlisting}

Apart from definitions, the SITA framework also enhance the model's ability on calculation. Constructing intermediate variables as $u$, $d$, $grad$, the model correctly gives the formalized explicit update form of the proximal gradient descent for logistic regression with $\ell_1$ penalty in the following algorithm class. The algorithm is correct not only syntactically in Lean, but also mathematically.
\begin{lstlisting}
class proximal_gradient_method_SparseLogistic (pro : SparseLogisticRegression_problem A b lambda) (x₀ : EuclideanSpace ℝ (Fin n)) where
  t : ℝ
  x : ℕ → EuclideanSpace ℝ (Fin n)
  y : ℕ → EuclideanSpace ℝ (Fin n)
  ht : t > 0
  update1 : ∀ k : ℕ,
    let u : Fin m → ℝ := A *ᵥ x k
    let d : Fin m → ℝ := fun i ↦ -b i * (1 / (1 + Real.exp (b i * u i)))
    let grad : EuclideanSpace ℝ (Fin n) := Aᵀ *ᵥ d
    y k = x k - t • grad
  update2 : ∀ (k : ℕ), ∀ i, x (k + 1) i = Real.sign (y k i) * max (|y k i| - t * lambda) 0
  initial : x 0 = x₀
\end{lstlisting}
This case shows that the model has the basic ability to formulate the concrete expressions of the optimization problem, such as the target functions or gradient. It can also fit in the target template of the abstract structure.
\paragraph{Case 2: (Success) $\ell_2$ regularized least square problem} The $\ell_2$ regularized least square problem is defined as 
\begin{align*}
    \min_x \frac{1}{2}\|Ax-b\|^2 + \|x\|_2.
\end{align*}
\begin{lstlisting}
class proximal_gradient_method_L2RegLS (pro : L2_regularized_least_squares_problem A b mu) (x₀ : EuclideanSpace ℝ (Fin n)) where
  t : ℝ
  x : ℕ → EuclideanSpace ℝ (Fin n)
  ht : t > 0
  update : ∀ k : ℕ,
    let grad : EuclideanSpace ℝ (Fin n) := Aᵀ *ᵥ (A *ᵥ (x k) - b)
    let y : EuclideanSpace ℝ (Fin n) := x k - t • grad
    x (k + 1) =
      if ‖ y ‖₂ ≤ t * mu then 0
      else (1 - (t * mu) / ‖ y ‖₂) • y
  initial : x 0 = x₀
\end{lstlisting}
The model successfully formalizes the explicit form of the proximal operator of the $\ell_2$ norm:
\[
\operatorname{prox}_{t\mu \|\cdot\|_2}(y) =
\begin{cases}
0, & \text{if } \|y\|_2 \leq t\mu, \\
\left(1 - \frac{t\mu}{\|y\|_2} \right) y, & \text{otherwise},
\end{cases}
\]
and integrates it into the iterative process of the algorithm. In each iteration, the model computes the gradient of the smooth part of the objective function, $\frac{1}{2}\|Ax - b\|^2$, and performs a gradient step followed by applying the proximal operator to $y$ to handle the nonsmooth term $\mu \|x\|_2$.

Throughout the process, the model incorporates mathematical reasoning and successfully formalizes the corresponding mathematical expressions. Syntactically, it employs a conditional structure using \texttt{if ... then ... else ...}, which is a relatively advanced but standard construct in Lean to handle piecewise-defined functions. This ensures the correctness and executability of the formal definition of the $\ell_2$ proximal operator.

\paragraph{Case 3: (Success) Group Lasso with Nesterov Acceleration} The formalization of definitions in the definition of group Lasso problem is harder. The form of the target function is as \eqref{eq:group lasso}. Here, we provide two versions of the definition of model. Both of the generations are correct. However, they utilize different ways to define the groups. The first one of them is slightly more general. 
\begin{lstlisting}
class Group_Lasso_problem {m n G : ℕ} (A : Matrix (Fin m) (Fin n) ℝ) (b : (Fin m) → ℝ) (lam : ℝ) (groupOf : Fin n → Option (Fin G)) where
  hA : A ≠ 0
  hlam : lam > 0

def Group_Lasso_problem.g (pro : Group_Lasso_problem A b lam groupOf) : EuclideanSpace ℝ (Fin n) → ℝ :=
  fun x ↦ lam * ∑ g : Fin G, ‖ fun j ↦ if groupOf j = some g then x j else 0 ‖₂
\end{lstlisting}
This formalization captures the regularization term of the group Lasso objective: \( \lambda \sum_{g=1}^{G} \|x_{g}\|_2, \)
where each group \( x_g \) corresponds to the components of \( x \in \mathbb{R}^n \) indexed by \texttt{groupOf}.The function \texttt{groupOf : Fin n → Option (Fin G)} assigns each coordinate to a group (\texttt{some g}) or excludes it (\texttt{none}). The term \texttt{Option} deals with indexes which are not in any group. Features not assigned to a group are excluded from the penalty. The condition \texttt{groupOf j = some g} identifies coordinates belonging to group $g$, enabling masked projection for computing group-wise $\ell_2$ norms.

\begin{lstlisting}
class Nesterov_Group_Lasso (pro : Group_Lasso_problem A b lam groupOf) (x₀ : EuclideanSpace ℝ (Fin n)) where
  hl : pro.l > (0 : ℝ)
  x : ℕ → EuclideanSpace ℝ (Fin n)
  y : ℕ → EuclideanSpace ℝ (Fin n)
  w : ℕ → EuclideanSpace ℝ (Fin n)
  t : ℕ → ℝ
  γ : ℕ → ℝ
  oriy : y 0 = x 0
  initial : x 0 = x₀
  teq : ∀ n : ℕ, t n = 1 / pro.l
  γeq : ∀ n : ℕ, γ n = 2 / (2 + n)
  update1 : ∀ k : ℕ+, y k = x k + (γ k * (1 - γ (k - 1)) / (γ (k - 1))) • (x k - x (k - 1))
  update2 : ∀ k : ℕ,
      let grad : EuclideanSpace ℝ (Fin n) := Aᵀ *ᵥ (A *ᵥ y k - b)
      w k = y k - t k • grad
  update3 : ∀ k : ℕ, x (k + 1) =
      fun j =>
        match groupOf j with
        | none => w k j
        | some g =>
            let s := (∑ j', if groupOf j' = some g then (w k j')^2 else 0) ^ (1/2)
            let factor := if s = 0 then 0 else max (1 - t k * lam / s) 0
            factor * w k j
\end{lstlisting}
The \texttt{update3} clause encodes the group-wise proximal operator for the group $\ell_2$ penalty. For each coordinate \( j \), the update distinguishes two cases:
\begin{itemize}
  \item If \texttt{groupOf j = none}, coordinate \( j \) is not penalized and is directly set as \( x_{k+1}(j) := w_k(j) \).
  \item If \texttt{groupOf j = some g}, coordinate \( j \) belongs to group \( g \), and an operator is applied:
  \[
  x_{k+1}(j) := \left( \max\left(1 - \frac{t_k \lambda}{\|w_k^{(g)}\|_2},\, 0 \right) \right) \cdot w_k(j),
  \]
  where \( w_k^{(g)} \) denotes the subvector of \( w_k \) restricted to group \( g \).
\end{itemize}
This is the exact proximal mapping of the group Lasso penalty \( \lambda \sum_g \|x_g\|_2 \) evaluated at \( w_k \). This is formalized with the ``match" pattern in formalization.

Another trial generates the following definitions of group lasso problem.
\begin{lstlisting}
class Group_Lasso_problem (A : Matrix (Fin m) (Fin n) ℝ) (b : Fin m → ℝ) (lam : ℝ) (G : ℕ) (group_index : Fin n → Fin G) where
  hA : A ≠ 0
  hlam : lam > 0

def Group_Lasso_problem.g (pro : Group_Lasso_problem A b lam G group_index) : EuclideanSpace ℝ (Fin n) → ℝ :=
  fun x ↦ lam * ∑ g : Fin G, ‖ fun i : Fin n => if group_index i = g then x i else 0 ‖₂
\end{lstlisting}

\begin{lstlisting}
class Nesterov_Group_Lasso (pro : Group_Lasso_problem A b lam G group_index) (x₀ : EuclideanSpace ℝ (Fin n)) where
  hl : pro.l > (0 : ℝ)
  x : ℕ → EuclideanSpace ℝ (Fin n)
  y : ℕ → EuclideanSpace ℝ (Fin n)
  w : ℕ → EuclideanSpace ℝ (Fin n)
  t : ℕ → ℝ
  γ : ℕ → ℝ
  oriy : y 0 = x 0
  initial : x 0 = x₀
  teq : ∀ n : ℕ, t n = 1 / pro.l
  γeq : ∀ n : ℕ, γ n = 2 / (2 + n)
  update1 : ∀ (k : ℕ+), y k = x k + (γ k * (1 - γ (k - 1)) / (γ (k - 1))) • (x k - x (k - 1))
  update2 : ∀ k : ℕ,
    let grad : EuclideanSpace ℝ (Fin n) := Aᵀ *ᵥ (A *ᵥ y k - b)
    w k = y k - t k • grad
  update3 : ∀ k : ℕ, x (k + 1) =
    fun i =>
      let g_i := group_index i
      let nrm := ‖ fun j : Fin n => if group_index j = g_i then w k j else 0 ‖₂
      if nrm = 0 then 0 else
        let scale := max (1 - t k * lam / nrm) 0
        scale * w k i
\end{lstlisting}
The second version employs simpler tactics and performs a case-by-case analysis. Upon manual verification, its results align with the correct target functions and the Nesterov acceleration method applied to the group Lasso problem.

\paragraph{Case 4: (Failed) Binary classification with exponential loss} There is an unfixable error occurred in the formalized definitions of binary classification with exponential loss. 
The optimization problem gives as follows:
\begin{align*}
    \min_x f(x) = \sum_{i=1}^n \exp (-b_i a_i^\top x) + \frac{\lambda}{2} \|x\|^2.
\end{align*}
The model also transform the vectors $a_i$ into a matrix and gives the following formalized definitions.

\begin{lstlisting}
def BinaryExpClass_problem.f {m n : ℕ} (A : Matrix (Fin m) (Fin n) ℝ) (b : Fin m → ℝ) (lam : ℝ) :
    EuclideanSpace ℝ (Fin n) → ℝ :=
  fun x ↦ (∑ i, exp (- b i * (A *ᵥ x) i)) + (lam / 2) * ‖ x ‖₂ ^ 2

class BinaryExpClass_GD {m n : ℕ} {A : Matrix (Fin m) (Fin n) ℝ} {b : Fin m → ℝ} {lam : ℝ}
  (pro : BinaryExpClass_problem A b lam) (x₀ : EuclideanSpace ℝ (Fin n)) where
  t : ℝ
  x : ℕ → EuclideanSpace ℝ (Fin n)
  ht : t > 0
  update : ∀ k : ℕ,
    let gra : EuclideanSpace ℝ (Fin n) := Aᵀ *ᵥ (fun i ↦ - b i * exp (- b i * (A *ᵥ x k) i)) + lam • x k
    x (k + 1) = x k - t • gra
  initial : x 0 = x₀
\end{lstlisting}

An error occurs at line 11 of the code due to a type mismatch. Specifically, we cannot directly add the variable \texttt{x k}, which has type \textbf{EuclideanSpace} $\mathbb{R}$ \texttt{(Fin n)}, to an expression of type \texttt{Fin n} $\to$ $\mathbb{R}$. Besides, the matrix vector multiplication is perform on the type \texttt{Fin n} $\to$ $\mathbb{R}$.  Although relevant information is present in the error knowledge, the model fails to resolve the issue, partly due to Lean’s complex and strict type coercion system. A common human workaround is to explicitly convert the form expression into the type \textbf{EuclideanSpace} $\mathbb{R}$ \texttt{(Fin n)} using a \texttt{let} binding for type transformation.

\paragraph{Case 5: (Failed) TV denoising with constraint splitting}
The total variation (TV) denoising gives as:
\begin{align*}
    \min_{x,z} \|x-b\|^2 + \|z\|_1, \quad \text{s.t.} \quad z=Dx.
\end{align*}
The problem can be solved via ADMM. However, the model fails to specialize the map $A_1$ in \eqref{eq: ADMM}. The statement of the general structure of ADMM is also defined on general Hilbert spaces. Hence the matrix here is generalized as continuous linear map on Hilbert spaces. The formalization code is given as below.
\begin{lstlisting}
class TV_denoising (n m : ℕ) (lam : ℝ) (D : Matrix (Fin m) (Fin n) ℝ) (b : EuclideanSpace ℝ (Fin n)) where
  hlam : lam > 0
  hD : D ≠ 0

def TV_denoising.A₁ (self : TV_denoising n m lam D b) : EuclideanSpace ℝ (Fin n) →L[ℝ] EuclideanSpace ℝ (Fin m) :=
  (Matrix.mulVecLin D).toContinuousLinearMap
\end{lstlisting}
For structure-to-instance task, the model needs to transform the type of the matrix $D$ as \texttt{Matrix (Fin n) (Fin n) $\mathbb{R}$} to a continuous linear map with type \texttt{EuclideanSpace $\mathbb{R}$ (Fin n) $\to$L[$\mathbb{R}$] EuclideanSpace $\mathbb{R}$ (Fin m)}. Type transformation is also challenging for human experts. The model attempts to leverage some functions from Mathlib4 to perform the transformation but fails. However, it comes close to succeeding, as the correct answer is as follows.
\begin{lstlisting}
def TV_denoising.A₁ (self : TV_denoising n m lam D b) : EuclideanSpace ℝ (Fin n) →L[ℝ] EuclideanSpace ℝ (Fin m) :=
  LinearMap.toContinuousLinearMap (Matrix.mulVecLin D)
\end{lstlisting}
It is also hard for the model to understand the type error as "invalid field notation, function 'LinearMap.toContinuousLinearMap' does not have argument with type (LinearMap ...) that can be used, it must be explicit or implicit with a unique name". 

From this perspective, enabling the model to flexibly perform type transformations requires not only a solid familiarity with the Mathlib library, but also a thorough understanding of possible error messages and usage patterns.

\paragraph{Conclusion}
This case study illustrates the strong capabilities of the SITA framework in generating formal definitions and algorithmic procedures for a variety of optimization problems in the Lean proof assistant. One of the most notable discovering is that the model can autonomously define new auxiliary functions and integrate them into the abstract structure. This demonstrates a promising degree of creativity and abstraction in its formal reasoning process. For example, it not only defines the objective function and regularization terms correctly but also follows through by formalizing the corresponding optimization algorithms within the same structural framework.

The model shows meta-programming capability in Lean. In the group Lasso example, it makes use of advanced tactics such as pattern matching, group-wise norm projection, and conditionals to define piecewise and masked operations. This reflects a deepening understanding of Lean's expressive power and modular design patterns. However, a critical challenge remains: the model often struggles with Lean’s strict and nuanced type system. Particularly in the failed cases, the model generates expressions with type mismatches, a common difficulty even for human users of Lean. The inability to perform the correct type coercion, despite having access to the relevant knowledge, highlights a gap in type inference and prediction under Lean’s formal logic constraints.

\subsection{Case Study of Proof Generation}

\paragraph{Case 1: (Success) the proof of properties in residual $\ell_2$ decomposition}
The problem gives as 
\begin{align*}
    \min_{x,r} \psi(x,r) =  \|x+r-y\|_2^2 + \lambda_1 \|x\|_1 + \lambda_2\|r\|_2
\end{align*}
To prove the convergence, one needs to establish that $\psi(x,r)$ is lower bounded. The model generates two slightly different but both correct proofs. We give the both codes here. The first is as below. 
\begin{lstlisting}
class ResidualL2Decomposition_problem (y : EuclideanSpace ℝ (Fin n)) (lambda1 lambda2 : ℝ) where
  hlambda1 : lambda1 > 0
  hlambda2 : lambda2 > 0

def ResidualL2Decomposition_problem.g (pro : ResidualL2Decomposition_problem y lambda1 lambda2) :
    EuclideanSpace ℝ (Fin n) → ℝ := fun r ↦ lambda2 * ‖ r ‖₂

lemma ResidualL2Decomposition_problem.lbdg (pro : ResidualL2Decomposition_problem y lambda1 lambda2) :
    BddBelow (pro.g '' Set.univ) := by
  use 0
  rintro _ ⟨z, _, rfl⟩
  unfold BCD_Residual_L2_Decomposition.ψ
  apply add_nonneg
  · apply add_nonneg
    · rw [Residual_L2_Decomposition_problem.f]
      apply mul_nonneg
      exact le_of_lt pro.hlambda1
      apply norm_nonneg
    · rw [Residual_L2_Decomposition_problem.g]
      apply mul_nonneg
      exact le_of_lt pro.hlambda2
      apply norm_nonneg
  · rw [Residual_L2_Decomposition_problem.H]
    apply mul_nonneg
    norm_num
    apply pow_two_nonneg
\end{lstlisting}

This method mainly utilizes the theorem \texttt{add\_nonneg}. However, the next formalization proves from another aspect.
\begin{lstlisting}
lemma BCD_Residual_L2_Decomposition.lbdψ (alg : BCD_Residual_L2_Decomposition pro x0 r0) : BddBelow (alg.ψ '' univ) := by
  obtain ⟨a, ha⟩ := pro.lbdf
  obtain ⟨b, hb⟩ := pro.lbdg
  use a + b
  rintro c ⟨z, _, hz⟩
  rw [← hz]
  dsimp [BCD_Residual_L2_Decomposition.ψ]
  have h1 := ha (mem_image_of_mem pro.f (Set.mem_univ z.1))
  have h2 := hb (mem_image_of_mem pro.g (Set.mem_univ z.2))
  have : 0 ≤ pro.H z := by
    unfold Residual_L2_Decomposition_problem.H
    positivity
  linarith
\end{lstlisting}
\begin{table*}[htbp]
\centering
\begin{tabular}{llccccc}
\toprule
\textbf{Model} & \textbf{Algorithm} & \textbf{Definition} & \textbf{Theorem} & \textbf{Instance} & \textbf{File} & \textbf{MV} \\
\midrule
\multirow{5}{*}{Direct-V3} 
 & GD       & 50.00\%    & 39.80\%   & 40.24\%   & 0.00\%     & 49.47   \\
 & PGM      & 22.73\%    & 8.70\%    & 11.27\%   & 0.00\%     & 47.53   \\
 & Nesterov & 16.39\%    & 10.91\%   & 2.53\%    & 0.00\%     & 48.67   \\
 & BCD      & 29.03\%    & 0.00\%    & 0.00\%    & 0.00\%     & 54.25   \\
 & ADMM     & 15.58\%    & 2.86\%    & 0.00\%    & 0.00\%     & 50.97  \\  
\midrule
\multirow{5}{*}{Direct-R1} 
 & GD       & 21.10\%    & 5.66\%    & 14.29\%    & 0.00\%    & 47.10   \\
 & PGM      & 34.31\%    & 1.35\%    & 15.25\%    & 0.00\%    & 38.51   \\
 & Nesterov & 38.76\%    & 26.87\%   & 24.53\%    & 0.00\%    & 44.60   \\
 & BCD      & 48.23\%    & 24.86\%   & 3.92\%     & 0.00\%    & 51.01   \\
 & ADMM     & 47.55\%    & 4.00\%    & 6.25\%     & 0.00\%    & 49.07    \\
\midrule
\multirow{5}{*}{SITA-V3} 
 & GD       & 83.33\%          & 85.19\%    & \textbf{100}\%    & 33.33\% & 69.96   \\
 & PGM      & 89.19\%          & 82.72\%    & 88.89\%           & 33.33\% & 70.90   \\
 & Nesterov & 93.75\%          & 83.87\%    & 87.50\%           & 25.00\% & 64.34   \\
 & BCD      & \textbf{94.55}\% & 99.10\%    & \textbf{100.00}\% & 44.44\% & 73.23   \\
 & ADMM     & \textbf{94.29}\%   & 82.41\%    & 77.78\%           & 0.00\%  & 48.98   \\
\midrule
\multirow{5}{*}{SITA-R1} 
 & GD       & \textbf{86.36}\%  & \textbf{89.87}\%  & 94.44\%           & \textbf{44.44}\% & \textbf{79.65} \\
 & PGM      & \textbf{100.00}\% & \textbf{98.73}\%  & \textbf{100.00}\% & \textbf{77.78}\% & \textbf{81.35} \\
 & Nesterov & \textbf{96.94}\%  & \textbf{98.41}\%  & \textbf{96.75}\%  & \textbf{62.50}\% & \textbf{75.56} \\
 & BCD      & 94.37\%           & \textbf{100.00}\% & \textbf{100.00}\% & \textbf{66.67}\% & \textbf{80.02} \\
 & ADMM     &  91.43\%          & \textbf{90.76}\%  &  \textbf{85.71}\% & \textbf{28.57}\% & \textbf{65.23} \\
\bottomrule
\end{tabular}
\caption{
Formalization completion rate comparison. 
\textbf{Direct-V3}: direct generation with DeepSeek-V3. 
\textbf{Direct-R1}: direct generation with DeepSeek-R1. 
\textbf{SITA-V3}: our framework using DeepSeek-V3.  
\textbf{SITA-R1}: our framework using DeepSeek-R1. 
\textbf{Definition}, \textbf{Theorem}, \textbf{Instance} and \textbf{File} denote the criteria for the evaluation of entire file generation. 
\textbf{MV}: results from majority voting.
}
\label{table: big completion}
\end{table*}
\paragraph{Remark} The model proves that the function $g(r) = \lambda_2 \|r\|_2$ is lower bounded with the assumption that $\lambda_2 > 0$. Although wrapped in the definitions of \texttt{ResidualL2Decomposition\_problem.g}, the model knows the target using the tactic ``unfold". The model can generate tactics using ``use", ``rintro", ``unfold", ``apply", ``exact". It also utilizes theorems including \texttt{mul\_nonneg}, \texttt{norm\_nonneg} correctly. This case shows that the model has the basic ability to prove assumptions.

\paragraph{Case 2: (Success )The main theorem for joint sparse coding}
The joint sparse coding solves the following optimization problem as 
\begin{align*}
    \min_{x,y} \|Ax+By-b\|^2 +\lambda_1\|x\|_1 + \lambda_2\|y\|_1.
\end{align*}
The problem can be solved using BCD method. The model can accommodate the local variable definition, $x,y \in \mathbb{R}^n$ into the abstract setting, where the variables are only assumed in general Hilbert spaces.

\begin{table*}[htbp]
\centering
\begin{tabular}{llcccccc}
\toprule
\textbf{Pass@Num} & \textbf{Criteria} &\textbf{PGM} & \textbf{GD} & \textbf{BCD} & \textbf{Nesterov} & \textbf{ADMM} & \textbf{Overall}\\
\midrule
\multirow{3}{*}{Pass@1} 
 & FS       &  55.56\%    & 33.33\%   & 44.44\%   & 50.00\% & 0.00\%   & 38.09\%  \\
 & SC      & 56.76\%    & 72.43\%    & 75.29\%   & 85.00\%  & 28.09\%     & 65.96\% \\
 & PS   & 26.86\%   & 35.72\%    &  24.29\%  & 52.50\%   & 0.00\%  & 27.37\% \\
\midrule
\multirow{3}{*}{Pass@2} 
 & FS       & 66.67\%    & 33.33\%   & 55.56\%   & 62.50\% & 14.28\%  & 47.62\%   \\
 & SC      & 76.42\%    & 81.82\%    & 80.21\%   & 91.66\% & 38.09\% & 73.10\%      \\
 & PS  & 40.28\%    & 47.62\%   & 47.23\%   & 52.5\% & 15.00\%  &41.45\%    \\
\midrule
\multirow{3}{*}{Pass@3} 
& FS       & 77.78\%    & 44.44\%   & 66.67\%   & 62.50\%  & 28.57\%& 57.14\%   \\
 & SC      & 98.10\%    & 83.36\%    & 88.82\%   & 97.80\% & 85.02\% & 90.72\%        \\
 & PS & 62.96\%    & 53.77\%   & 50.55\%    & 63.28\% & 20.00\% & 51.23\%      \\
\bottomrule
\end{tabular}
\caption{
Performance on each algorithm class versus the pass number.
\textbf{FS}: File success compilation rate;
\textbf{SC}: syntactic correctness rate of definitions and statements in the failed cases; 
\textbf{PS}: proof completion rate (\%) in the success cases;
}
\label{table: pass}
\end{table*}
\begin{lstlisting}
theorem Joint_sparse_coding_Convergence_to_critpt (γ : ℝ) (hγ : γ > 1)
    (ck : ∀ k, alg.c k = 1 / (γ * pro.l)) (dk : ∀ k, alg.d k = 1 / (γ * pro.l)) :
    ∃ z_ : (WithLp 2 (EuclideanSpace ℝ (Fin n) × EuclideanSpace ℝ (Fin m))),
      z_ ∈ (critial_point alg.ψ) ∧ Tendsto alg.z atTop (nhds z_) := by
  apply Convergence_to_critpt (alg := alg.BCD) γ hγ ck dk alg.bd alg.hψ alg.lbdψ 
\end{lstlisting}
The model can generate the final convergence theorem as above. There are bunch of lemmas generated by the system relating to the problem properties. The system can also provide a structure to instance kind proof, which just leverage the theorems in the template file. 

\paragraph{Case 3: (Failed) Properties in ridge regression with variable splitting}
This optimization problem is as
\begin{align*}
    \min_{x,z} \frac{1}{2}\|x-c\|^2 + \frac{\lambda}{2}\|z\|_2^2, \quad \text{s.t.} \quad x=z.
\end{align*}
The problem can be addressed using ADMM. The corresponding augmented Lagrangian function gives as
\begin{align*}
    L_\rho(x,z,y) = \frac{1}{2}\|x-c\|^2 + \frac{\lambda}{2}\|z\|_2^2 + \langle y, x-z\rangle + \frac{\rho}{2} \|x-z\|^2.
\end{align*}
One of the key properties we aim to verify is the uniqueness of the solution to the subproblem $\min_x L_(x, \Tilde{z}, \Tilde{y})$ for any $\Tilde{z}, \Tilde{y} \in \mathbb{R}^n$. This can be proved via direct calculation or applying the strongly convexity of the augmented Lagrangian function. Both of the ways need to do long chain reasoning.

The model struggles to complete this proof, encountering difficulties in formalizing the necessary arguments. Nevertheless, it eventually fails to generate a complete proof. The following excerpt presents the proof trace before the application of the harmless correction stage. Due to errors in this early version, the post-harmless version is significantly reduced in length.
\begin{lstlisting}
lemma RidgeSplit.uniq_subX (self : RidgeSplit lam c) (z : EuclideanSpace ℝ (Fin n)) (y : EuclideanSpace ℝ (Fin n)) :
    Problem.UniqSubX self.isProblem ρ z y := by
  let d := z - y
  let x0 := (1/(1+ρ)) • (c + ρ • d)
  have h_expand : ∀ x, self.isProblem.augLag ρ x z y = self.isProblem.augLag ρ x0 z y + (1+ρ)/2 * ‖ x - x0 ‖₂^2 := by
    intro x
    let δ := x - x0
    have hc : x0 - c = ρ • (d - x0) := by
      field_simp [x0, d]
      rw [smul_smul, mul_comm, mul_inv_cancel (by linarith), one_smul]
      linear_combination (1+ρ) • x0 - c - ρ • d
    unfold Problem.augLag
    simp only [RidgeSplit.A₁, RidgeSplit.A₂, RidgeSplit.bv, ContinuousLinearMap.coe_id', id_eq, Pi.add_apply,
      ContinuousLinearMap.neg_apply, Pi.neg_apply, add_zero, sub_zero, add_neg_eq_zero]
    unfold RidgeSplit.f₁
    simp only
    rw [← sub_add_sub_cancel x c x0, ← sub_add_sub_cancel x d x0]
    rw [norm_add_sq, norm_add_sq]
    rw [norm_sub_sq, norm_sub_sq]
    simp only [inner_add_left, inner_add_right, inner_smul_right, inner_sub_left, inner_sub_right]
    rw [hc]
    ring_nf
    simp only [smul_sub, sub_add_sub_cancel, sub_self, inner_zero_left, inner_zero_right, zero_add]
    ring
  have h_x0_min : IsMinOn (self.isProblem.augLag ρ · z y) univ x0 := by
    intro x _
    rw [h_expand x]
    nlinarith
  have h_unique : ∀ x', IsMinOn (self.isProblem.augLag ρ · z y) univ x' → x' = x0 := by
    intro x' h_min
    have h_eq : self.isProblem.augLag ρ x' z y = self.isProblem.augLag ρ x0 z y :=
      le_antisymm (h_min x0 (mem_univ _)) (h_x0_min x' (mem_univ _))
    rw [h_expand x', h_eq] at h_min
    have : (1+ρ)/2 * ‖x' - x0‖₂^2 = 0 := by linarith
    rw [mul_eq_zero] at this
    cases this
    · have : (1+ρ)/2 ≠ 0 := by linarith
      contradiction
    · rwa [norm_sq_eq_zero] at this
  exact ⟨x0, h_x0_min, h_unique⟩
\end{lstlisting}
The idea of the model generated prove is to validate $x_0$ is the unique local minima of the target function by using inequalities and transformation of the target function. However, the model fails to generate a correct minima $x_0 = \frac{1}{\rho+1}(c+\rho z -y)$. It mistakenly treats $\frac{1}{\rho+1}(c+\rho (z -y))$ as the minimizer. The proof needs further improvement but admits a good structure to finish.
\paragraph{Conclusion}
The model is capable of generating formal proofs for basic analytical assumptions, such as boundedness and continuity. However, when faced with more intricate properties—such as the KL property or the Lipschitz continuity of the gradient—its ability to produce meaningful and correct proofs remains limited. These challenges highlight current limitations in handling long-chain reasoning and complex functional properties, suggesting the need for further advancement in integrating deeper mathematical understanding into the formalization pipeline. 

\section{More Statistic Results}
In this subsection, we provide additional statistical results, offering a more detailed breakdown by problem class. A detailed comparison of formalization completion rates across various optimization problems is presented in Table \ref{table: big completion}, with evaluations along the following dimensions: definitions, theorems, instances, full file generation, and majority voting. The methods compared include direct generation using DeepSeek-V3 and DeepSeek-R1 and our framework SITA with DeepSeek-V3 and DeepSeek-R1, within each algorithm class,  respectively. The results show that SITA methods achieve higher completion rates than direct generation approaches across all evaluation dimensions and optimization algorithms. In particular, SITA demonstrates an advantage in generating complete formal files, where direct generation methods consistently produce invalid or incomplete outputs. This indicates that the structured decomposition and intermediate-error feedback in SITA are crucial for handling longer or more complex formalization targets. The output of DeepSeek-R1 is better than that of DeepSeek-V3, party due to its better reasoning ability. The majorty voting scores, which provide a semantic measure, further confirm the stability of the SITA-R1 model across problem classes.  
\begin{figure}[htbp]
    \centering
    \includegraphics[width=0.9\linewidth, height=0.25\textwidth]{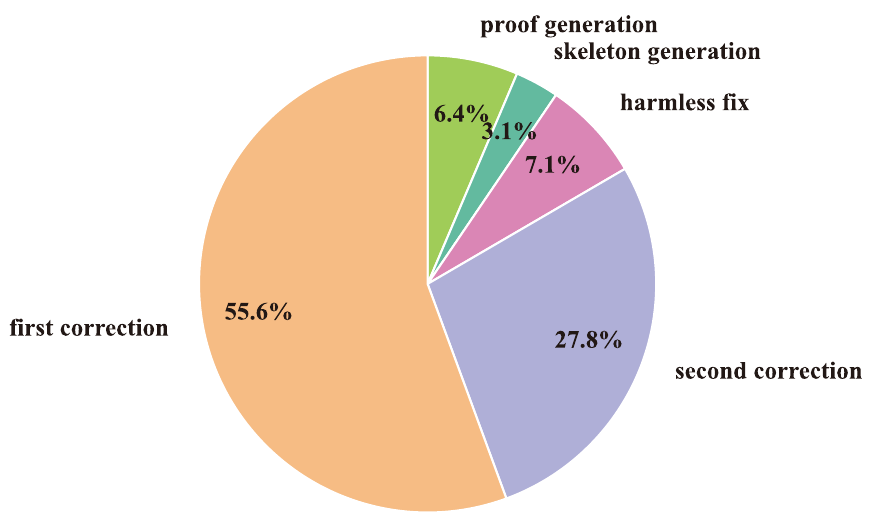}
    \caption{Time consumption of each part of the generation using DeepSeek-R1. The first correction denotes the backbone correction stage. The second correction denotes the proof correction stage.  Harmless fixing denotes the output post-processing part.}
    \label{fig:timeR1}
\end{figure}

Completion rates also vary noticeably across different problem classes. For example, algorithms such as PGM and Nesterov, which have simpler mathematical structures and clearer canonical formulations, result in higher formalization rates. In contrast, algorithms like ADMM, yield lower scores, especially when using direct generation. This reflects the increased difficulty in formalizing problems that involve more intricate dependencies or less commonly seen theoretical constructs. The comparatively lower scores observed for GD can be attributed to the fact that the associated formalization tasks often involve summations or complex expressions with exponential and logarithmic functions, which increase the symbolic and syntactic burden during generation. 

Table \ref{table: pass} presents the performance of different algorithm classes across three evaluation criteria under varying numbers of attempts (pass@1, pass@2, and pass@3). Overall, increasing the number of attempts consistently improves performance across all criteria. The syntactic correctness generally exhibits the highest scores, indicating that the model is relatively reliable at producing syntactically valid definitions and statements. In contrast, proof success remains the most challenging criterion. While generation quality improves with more attempts, challenges remain in semantic correctness and reasoning, particularly for more structurally complex algorithms.
\begin{figure}[htbp]
    \centering
    \includegraphics[width=0.9\linewidth]{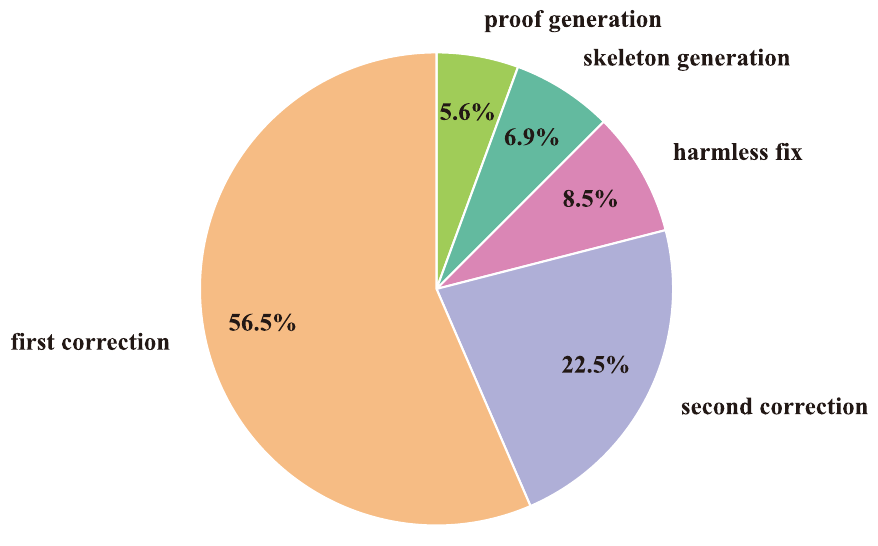}
    \caption{Time consumption of each part of the generation using DeepSeek-V3. The first correction denotes the backbone correction stage. The second correction denotes the proof correction stage. Harmless fixing denotes the output post-processing part.}
    \label{fig:time}
\end{figure}

The complete time allocation for each component is shown in Figure~\ref{fig:timeR1} and Figure~\ref{fig:time} for DeepSeek-R1 and DeepSeek-V3, respectively. Nearly half of the total time is spent on the first round of correction, which focuses on refining the file sketch. Nearly a quarter of the time is devoted to the second correction stage, which targets the proofs. This distribution aligns with the overall success rate of approximately 50\% for DeepSeek-R1 model and 40\% for DeepSeek-V3 model. The second stage of correction focuses on fixing proof-related errors, which are generally longer and more complex than issues in definitions or theorem statements. Hence, this stage can be more time-consuming and demanding. However, we observe that the number of files requiring correction in the second stage is roughly proportional to those in the first stage. This is partly because, although the second stage involves fixing complete proofs, the model sometimes resorts to using sorry, which is disallowed in this stage for proofs to bypass difficult proof obligations. Such behavior reduces the actual effort required from the model, bringing the overall burden closer to that of the first stage, which involves more structural edits such as repairing definitions and statement syntax.

\end{document}